\documentclass[sigconf,nonacm]{acmart}

\usepackage{array}
\newcolumntype{P}[1]{>{\centering\arraybackslash}p{#1}}
\usepackage{graphicx}%
\usepackage{multirow}%
\usepackage{amsmath,amsfonts}%
\usepackage{amsthm}%
\usepackage{mathrsfs}%
\usepackage[title]{appendix}%
\usepackage{xcolor}%
\usepackage{textcomp}%
\usepackage{manyfoot}%
\usepackage{booktabs}%
\usepackage{listings}%
\usepackage{dblfloatfix}

\usepackage{tikz}
\usetikzlibrary{fit}
\usetikzlibrary{arrows.meta, positioning}
\usetikzlibrary{matrix,shapes.geometric,shapes.misc}
\usetikzlibrary{calc}

\usepackage{nicematrix}

\usepackage{algorithm}
\usepackage[noend]{algpseudocode}
\usepackage{rotating}

\usepackage[english]{babel}
\usepackage{url}
\usepackage{tablefootnote}
\usepackage{booktabs}
\usepackage{xr}
\usepackage{pdfpages}

\usepackage{enumitem}

\newcommand{\stickfigures}{\ensuremath{\textsc{stickfigures}}\xspace}
\newcommand{\optdigits}{\ensuremath{\textsc{optdigits}}\xspace}

\tikzset{%
  point/.style={circle, inner sep=2pt}, 
  other point/.style={fill=black, point}  
}

\makeatletter
\providecommand*{\cupdot}{%
  \mathbin{%
    \mathpalette\@cupdot{}%
  }%
}
\newcommand*{\@cupdot}[2]{%
  \ooalign{%
    $\m@th#1\cup$\cr
    \sbox0{$#1\cup$}%
    \dimen@=\ht0 %
    \sbox0{$\m@th#1\cdot$}%
    \advance\dimen@ by -\ht0 %
    \dimen@=.5\dimen@
    \hidewidth\raise\dimen@\box0\hidewidth
  }%
}

\providecommand*{\bigcupdot}{%
  \mathop{%
    \vphantom{\bigcup}%
    \mathpalette\@bigcupdot{}%
  }%
}
\newcommand*{\@bigcupdot}[2]{%
  \ooalign{%
    $\m@th#1\bigcup$\cr
    \sbox0{$#1\bigcup$}%
    \dimen@=\ht0 %
    \advance\dimen@ by -\dp0 %
    \sbox0{\scalebox{2}{$\m@th#1\cdot$}}%
    \advance\dimen@ by -\ht0 %
    \dimen@=.5\dimen@
    \hidewidth\raise\dimen@\box0\hidewidth
  }%
}
\makeatother

\usepackage[utf8]{inputenc} % allow utf-8 input
\usepackage[T1]{fontenc}    % use 8-bit T1 fonts
\usepackage{booktabs}       % professional-quality tables
\usepackage{amsfonts}       % blackboard math symbols
\usepackage{nicefrac}       % compact symbols for 1/2, etc.
\usepackage{microtype}      % microtypography
\usepackage{amsmath}
\usepackage{mathtools}
\usepackage{graphicx}
\usepackage{xspace}
\usepackage{float}
\usetikzlibrary{calc,positioning}
\usepackage{varwidth}
\usepackage{enumitem}
\usetikzlibrary{positioning}

\usepackage{pgfplots}
\pgfplotsset{compat=newest}
\usepgfplotslibrary{groupplots}
\usepgfplotslibrary{dateplot}

\makeatletter
\newcommand\footnoteref[1]{\protected@xdef\@thefnmark{\ref{#1}}\@footnotemark}
\makeatother

\DeclareMathOperator*{\argmin}{\arg\min}

\newcommand{\spara}[1]{\smallskip\noindent{\bf{#1}}}
\renewcommand{\spara}[1]{\noindent\textbf{#1}}

\newcommand{\np}{\ensuremath{\mathbf{NP}}\xspace}

\newcommand{\dataset}{\ensuremath{\mathcal{D}}\xspace}

\newcommand{\inputmat}{\ensuremath{\mathbf{D}}\xspace}

\newcommand{\nrows}{\ensuremath{n}\xspace}

\newcommand{\actualrank}{\ensuremath{r}\xspace}

\newcommand{\hadkmeans}{\textsc{Khatri-Rao}-\ensuremath{k}-\textsc{Means}\xspace}
\newcommand{\hadkmeanssum}{\textsc{Khatri-Rao}-\ensuremath{k}-\textsc{Means}-$+$\xspace}
\newcommand{\hadkmeansprod}{\textsc{Khatri-Rao}-\ensuremath{k}-\textsc{Means}-$\times$\xspace}
\newcommand{\kmeans}{\ensuremath{k}-\textsc{Means}\xspace}
\newcommand{\kmeansprob}{\ensuremath{k}-Means\xspace}
\newcommand{\kmeanspp}{\ensuremath{k\textsc{-Means++}}\xspace}

\newcommand{\krkmeans}{\textsc{Khatri-Rao}-\ensuremath{k}-\textsc{Means}\xspace}

\newcommand{\dkm}{\textsc{DKM}\xspace}
\newcommand{\krdkm}{\textsc{Khatri-Rao DKM}\xspace} 

\newcommand{\idec}{\textsc{IDEC}\xspace}
\newcommand{\kridec}{\textsc{Khatri-Rao IDEC}\xspace} 

\newcommand{\budget}{\ensuremath{b}\xspace}

\newcommand{\A}{\ensuremath{\mathbf{A}}\xspace}
\newcommand{\B}{\ensuremath{\mathbf{B}}\xspace}

\newcommand{\cluster}{\ensuremath{C}\xspace}

\newcommand{\semicentroidword}{protocentroid\xspace}

\newcommand{\semicentroidsword}{protocentroids\xspace}

\newcommand{\ourkmeans}{Khatri-Rao \kmeansprob}

\newcommand{\aggregator}{\ensuremath{\oplus}\xspace}

\newcommand{\bigO}{\ensuremath{\mathcal{O}}\xspace}

\newcommand{\ssetot}[1]
{\ensuremath{E(\inputmat)}\xspace}

\newcommand{\tol}{\ensuremath{\epsilon}\xspace}

\newcommand{\mnist}{\ensuremath{\textsc{MNIST}}\xspace}

\newcommand{\clusteringfunction}{\ensuremath{f_{\mathcal{C}}}\xspace}

\newcommand{\clusteringfunctionquality}{\ensuremath{Q_{\mathcal{C}}}\xspace}

\newcommand{\allparams}{\ensuremath{\Theta}\xspace}

\newcommand{\allparamscentroids}{\ensuremath{\Theta_{\boldsymbol{\mu}}}\xspace}

\newcommand{\allparamsautoencoder}{\ensuremath{\Theta_{\boldsymbol{\alpha}}}\xspace}

\newcommand{\kmeanscentroids}{\ensuremath{\{\boldsymbol{\mu}_1 , \dots , \boldsymbol{\mu}_k\}}\xspace}

\newcommand{\protocentroids
}{\ensuremath{\boldsymbol{\theta}}\xspace}

\newcommand{\protocentroid
}{\ensuremath{\boldsymbol{\theta}}}

\newcommand{\nsets
}{\ensuremath{p}\xspace}

\newcommand{\krcentroid
}{\ensuremath{\boldsymbol{\mu}_i}\xspace}

\newcommand{\nlayers}{\ensuremath{n_l}\xspace}

\newcommand{\autoencoderweightmatrixl}{
\mathbf{W}_l\xspace}

\newcommand{\blobs}{\textsc{Blobs}\xspace}
\newcommand{\cmu}{\textsc{CMU Faces}\xspace}
\newcommand{\double}{\textsc{Double MNIST}\xspace}
\newcommand{\har}{\textsc{HAR}\xspace}
\newcommand{\olivetti}{\textsc{Olivetti Faces}\xspace}
\newcommand{\soybean}{\textsc{Soybean Large}\xspace}
\newcommand{\symbols}{\textsc{Symbols}\xspace}
\newcommand{\chameleon}{\textsc{Chameleon}\xspace}
\newcommand{\classification}{\textsc{Classification}\xspace}

\newcommand{\rfifteen}{\textsc{R15}\xspace}

\newcommand{\ndatapoints}{\ensuremath{n}\xspace}
\newcommand{\xbf}{\ensuremath{\mathbf{x}}\xspace}
\newcommand{\nfeats}{\ensuremath{m}\xspace}
\newcommand{\centroid}{\ensuremath{\boldsymbol{\mu}}\xspace}
\newcommand{\nclusters}{\ensuremath{k}\xspace}
\newcommand{\protocentroidsetcardinality}{\ensuremath{h}\xspace}
\newcommand{\card}{\protocentroidsetcardinality}

\newcommand{\centroidsset}{\ensuremath{M_{\boldsymbol{\mu}}}\xspace}

\newcommand{\zerovector}{\ensuremath{\mathbf{0}}\xspace}

\newcommand{\abf}{\ensuremath{\mathbf{a}}\xspace}
\newcommand{\krdcfull}{Khatri-Rao deep clustering\xspace}

\newcommand{\FEMNIST}{\textsc{FEMNIST}\xspace}

\newcommand{\fedkmeans}{\textsc{F}\ensuremath{k}\textsc{M}\xspace}
\newcommand{\fedhadkmeans}{\textsc{Khatri-Rao}-\textsc{F}\ensuremath{k}\textsc{M}\xspace}

\newcommand{\kr}{Khatri-Rao\xspace}

\newcommand{\naive}{na\"ive
\xspace}

\newcommand{\amat}{\ensuremath{\mathbf{A}}\xspace}

\newcommand{\bmat}{\ensuremath{\mathbf{B}}\xspace}

\newcommand{\nhadamardfactors}{\ensuremath{q}\xspace}

\newtheorem{problem}{Problem}
\renewcommand\footnotetextcopyrightpermission[1]{} % no permission footnote
\setcopyright{none}                      % no ACM copyright text

\begin{document}

%%
%% The "title" command has an optional parameter,
%% allowing the author to define a "short title" to be used in page headers.
%%
%% EDBT rule: --> Please use ``titlecase'' in the title!

\title{Khatri-Rao Clustering for Data Summarization}

%%
%% The "author" command and its associated commands are used to define
%% the authors and their affiliations.
%% Of note is the shared affiliation of the first two authors, and the
%% "authornote" and "authornotemark" commands
%% used to denote shared contribution to the research.
%%
%% EDBT rule: --> At least 1 (the corresponding) author needs to have a
%%                registered ORCID, ideally all authors have one

\author{Martino Ciaperoni}
\affiliation{%
  \institution{Scuola Normale Superiore}
  \city{Pisa}
  \country{Italy}
}
\email{martino.ciaperoni@sns.it}
\orcid{0009-0009-7581-2031}

\author{Collin Leiber}
\affiliation{%
  \institution{Aalto University}
  \city{Espoo}
  \country{Finland}
}
\email{collin.leiber@aalto.fi}
\orcid{0000-0001-5368-5697}

\author{Aristides Gionis}
\affiliation{%
  \institution{KTH Royal Institute of Technology}
  \institution{Digital Futures}
  \city{Stockholm}
  \country{Sweden}
}
\email{argioni@kth.se}
\orcid{0000-0002-5211-112X}

\author{Heikki Mannila}
\affiliation{%
  \institution{Aalto University}
  \city{Espoo}
  \country{Finland}
}
\email{heikki.mannila@aalto.fi}
\orcid{0009-0000-3772-6073}

%%
%% By default, the full list of authors will be used in the page
%% headers. Often, this list is too long, and will overlap
%% other information printed in the page headers. This command allows
%% the author to define a more concise list
%% of authors' names for this purpose.

\renewcommand{\shortauthors}{Ciaperoni et al.} %%MS: removed contents
%\renewcommand{\shorttitle}{} %%MS: added this one

%%
%% The abstract is a short summary of the work to be presented in the
%% article.
\begin{abstract}
  As datasets continue to grow in size and complexity, finding succinct yet accurate data summaries poses a key challenge. Centroid-based clustering, a widely adopted approach to address this challenge, finds informative summaries of datasets in terms of few prototypes, each representing a cluster in the data. Despite their wide adoption, the resulting data summaries often contain redundancies, limiting their effectiveness particularly in datasets characterized by a large number of underlying clusters. To overcome this limitation, we introduce the \kr clustering paradigm that extends traditional centroid-based clustering to produce more succinct but equally accurate data summaries by postulating that centroids arise from the interaction of two or more succinct sets of protocentroids. 

  We study two central approaches to centroid-based clustering, namely the well-established \kmeans algorithm and the increasingly popular topic of deep clustering, under the lens of the \kr paradigm. To this end, we introduce the \hadkmeans algorithm and the \krdcfull framework. Extensive experiments show that \hadkmeans can strike a more favorable trade-off between succinctness and accuracy in data summarization than standard \kmeans. Leveraging representation learning, the \krdcfull framework offers even greater benefits, reducing even more the size of data summaries given by deep clustering while preserving their accuracy.
\end{abstract}

%% Keywords. The author(s) should pick words that accurately describe
%% the work being presented. Separate the keywords with commas.
\keywords{Data summarization, centroid-based clustering, deep clustering}

%%
%% This command processes the author and affiliation and title
%% information and builds the first part of the formatted document.
\maketitle

\section{Introduction}\label{sec:introduction}

Distilling datasets into succinct and meaningful summaries represents an essential and ubiquitous task. Clustering has emerged as a core methodology for tackling this problem. In particular, centroid-based clustering holds a central position when looking for effective data-summarization strategies. Given a notion of similarity, centroid-based clustering algorithms summarize data by assigning each data point to the closest element in a succinct set of representative data points called \emph{centroids}. The data summaries provided by centroid-based clustering identify meaningful clusters and reveal patterns useful for applications such as exploratory data analysis, segmentation, anomaly detection, and document organization~\cite{xu2015comprehensive}.

\begin{figure}[t!]
\centering
\includegraphics[width=0.29\textwidth]{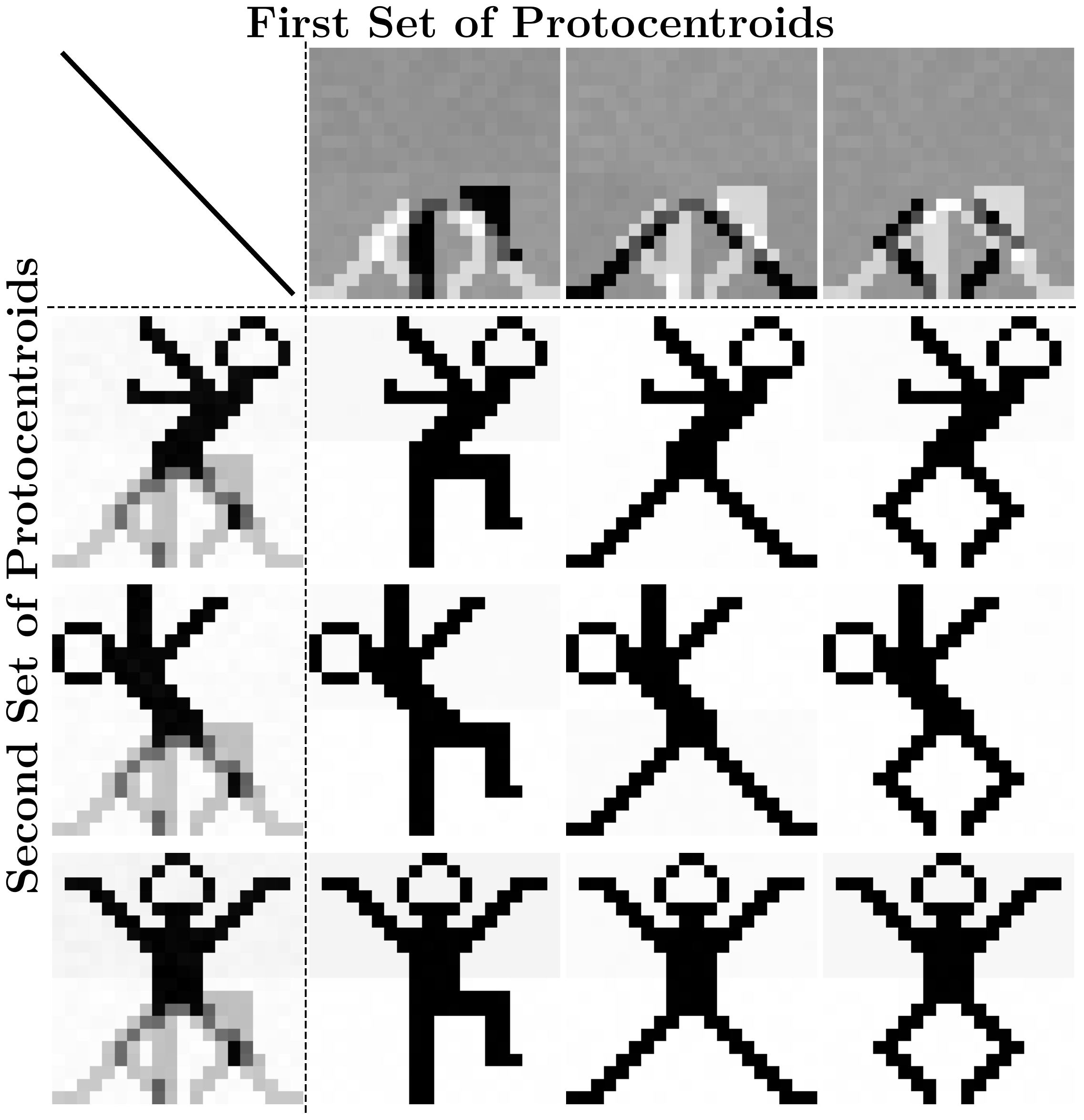}
\caption{\stickfigures dataset. Example of two sets of $3$  \semicentroidsword interacting additively to generate $9$ centroids.}
    \label{fig:stickfigures_example}
\end{figure}

In many cases, the interest lies in extracting a data summary which is as succinct and accurate as possible. Clustering has consistently shown success in summarizing data across diverse domains, for example, in image~\cite{omari2025advancing}, document~\cite{dvorsky2010document}, tabular~\cite{han2016lossless}, and spatio-temporal~\cite{whelan2010data} data processing, as well as for compressing deep neural networks~\cite{son2018clustering,somasundaram2011novel}. 

\spara{Pushing the boundaries of centroid-based clustering for data summarization through Khatri-Rao operators.}
Many modern applications demand the summarization of datasets of unprecedented scale~\cite{draganov2024settling}. As the size of a dataset increases, the underlying cluster structure may evolve either by enlarging clusters or by introducing additional clusters. Thus, modern datasets can exhibit a massive number of underlying clusters~\cite{kobren2017hierarchical,bateni2021extreme}. For instance, studies on protein structures analyze millions of clusters~\cite{barrio2023clustering}. Similarly, a large number of clusters can be incurred in topic modeling for documents~\cite{sethia2022framework}. In entity resolution, including record linkage and de-duplication, the number of clusters grows with dataset size~\cite{betancourt2016flexible}, and in real-world networks the number of clusters tends to follow a power-law distribution that also increases with network size~\cite{clauset2004finding}. Centroid-based clustering struggles to yield data summaries that are both accurate and succinct in similar large-scale applications characterized by a massive number of clusters. 

In spite of this, no existing work challenges the long-standing centroid-based clustering paradigm toward more succinct data summarization. To address this gap, we adopt a data-compression perspective on centroid-based clustering, and we investigate the following research question: \textit{do standard centroid-based clustering algorithms produce data summaries that carry some degree of redundancy, suggesting the potential for further compression?} To shed light on this question, we relax a crucial assumption of centroid-based clustering, namely that centroids are independent entities. Often, richer representations emerge from the interaction of simpler building blocks, as illustrated, e.g., by multi-head attention~\cite{vaswani2017attention}. In this spirit, we postulate that centroids admit a more succinct representation in terms of a smaller set of building blocks. More specifically, we introduce the \kr clustering paradigm, in which centroids arise from the interaction of succinct sets of \emph{\semicentroidsword} through so-called \kr operators. These operators aggregate each \semicentroidword in each set with all \semicentroidsword of the other sets via element\-wise operations such as sums or products. 

A simple example of this formulation is given in Figure~\ref{fig:stickfigures_example}. It shows that the nine clusters in the \stickfigures dataset~\cite{gunnemann2014smvc} can be represented by two sets of three \semicentroidsword. Three \semicentroidsword explain the upper part of the stick figures and three explain the lower part. The final centroids can then be generated by additively combining each \semicentroidword in one set with every \semicentroidword in the other set. 
In this case, centroids are said to exhibit a Khatri-Rao structure, and more precisely they correspond to the Khatri-Rao sum of the two sets of \semicentroidsword. 
The stick-figure example shows that the dataset can be summarized by just $6$ images, whereas standard centroid-based clustering algorithms require $9$ images.
Extending beyond the example, $\nsets$ sets of  $\card_1, \card_2, \dots$ and  $\card_\nsets$ \semicentroidsword can represent up to $\prod_{i=1}^\nsets \card_i$ centroids. 

A naïve approach to obtain such a succinct clustering-based description of a dataset would be to start from a set of centroids and subsequently extract an approximation of the centroids satisfying the Khatri-Rao structure. 
Nevertheless, for any dataset, multiple centroid-based summaries may achieve nearly the same succinctness and accuracy, some of which may (approximately or exactly) admit an even more succinct representation using Khatri–Rao operators, while others may not. Thus, the Khatri-Rao clustering paradigm we introduce does not start by finding centroids through standard approaches, but extends standard approaches to directly target data summaries satisfying the Khatri-Rao structure. 

As a first concrete instantiation of the Khatri-Rao clustering paradigm, we design \hadkmeans, which builds on the most popular algorithm for centroid-based clustering, i.e., the \kmeans algorithm~\cite{lloyd1982least}, to find centroids that can be succinctly expressed as the \kr sum or product of two or more sets of \semicentroidsword. 
Although \hadkmeans can result in considerably more succinct and yet equally accurate data summaries than standard \kmeans, it is more prone to converge to undesirable local minima. Finding high-quality solutions may require exploring many different initializations. Motivated by this limitation, we introduce the \kr deep clustering framework, which starts from \hadkmeans and extends the \kr paradigm to deep clustering. Deep clustering effectively handles large-scale and high-dimensional data by incorporating deep-learning-based representation learning. By aligning the learned representations with the Khatri–Rao structure, the \kr deep clustering framework overcomes the limitations of \hadkmeans, and usually provides data summaries that have a comparable accuracy to unconstrained baselines but are consistently more succinct. 

Figure~\ref{fig:tradeoff} anticipates the benefits of Khatri-Rao clustering by comparing \kmeans and the deep clustering algorithms \textsc{Improved Deep Embedded Clustering} (\idec)~\cite{Guo2017ImprovedDE} and \textsc{Deep-k-Means} (\dkm)~\cite{fard2020deep} against their extensions based on the  Khatri-Rao paradigm. The \kr clustering algorithms maintain comparable accuracy (here measured by \textit{unsupervised clustering accuracy}~\cite{Yang2010accuracy}) and simultaneously achieve a high level of compression.

\begin{figure}[t!]
%\centering
%\scalebox{0.7}{\input{figures/firstPlot.tikz}}
\includegraphics[width=0.36\textwidth]{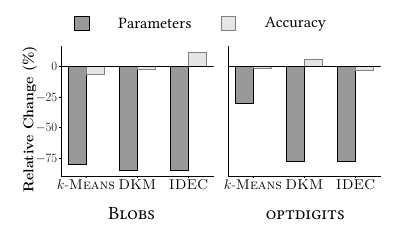}
%\makebox[0.4\textwidth]{%
%  \makebox[0.5\linewidth][c]{\hspace*{1.8cm}\blobs}%
%  \makebox[0.5\linewidth][c]{\hspace*{-0.5cm}\optdigits}%
%}
\caption{For a synthetic (\blobs) and a real (\optdigits) dataset, we show relative percentage changes in unsupervised clustering accuracy and parameter count for clustering solutions from algorithms based on the \kr paradigm relative to the baseline algorithms \kmeans, \textsc{Deep-k-Means} (\dkm) and \textsc{Improved Deep Embedded Clustering} (\idec).}
\label{fig:tradeoff}
\end{figure}

\spara{Contributions.}
Our contributions can be summarized as follows. 
\begin{itemize}
    \item We formalize the Khatri-Rao clustering paradigm, outlining its general principles and formulating the \ourkmeans and \kr deep clustering problems as concrete instantiations of the paradigm. 
    \item We introduce the \hadkmeans clustering algorithm and the Khatri-Rao deep clustering framework to effectively address the formulated problems. 
    \item Through extensive experiments, we show that \krkmeans can improve the trade-off between data summary size and accuracy over standard \kmeans. Even more remarkably, the Khatri–Rao deep clustering framework can compress the data summaries produced by deep clustering algorithms by up to $85\%$ with negligible loss in accuracy.
\end{itemize}

\spara{Roadmap.}
The rest of the paper is organized as follows. Section~\ref{sec:related} discusses related work. Section~\ref{sec:preliminaries} introduces the background. Section~\ref{sec:problem_formulation} formalizes the \kr clustering paradigm, and introduces the \ourkmeans and \kr deep clustering problem statements. Section~\ref{sec:naive} discusses a \naive approach to \kr clustering. Sections~\ref{sec:algorithms_shallow} and~\ref{sec:deep_clustering} present the \hadkmeans algorithm and the Khatri-Rao deep clustering framework, respectively. Section \ref{sec:designchoices} guides design choices in \kr clustering, and Section~\ref{sec:experiments} contains the experimental evaluation. Finally, conclusions are drawn in Section~\ref{sec:conclusions}. An appendix and our implementation are available online\footnote{\url{https://github.com/maciap/KhatriRaoClustering}}.

\section{Related Work}\label{sec:related}

In this section, we review prior work relevant to the present study.

\spara{Clustering, data summarization and compression.}
Obtaining data summaries that are as succinct and as accurate as possible has established itself as a central problem in such domains as database systems under various names. It is useful to distinguish between data summarization and compression. The goal of data summarization is to craft a summary of a dataset that captures its essential patterns. Centroid-based clustering is a popular approach for data summarization, although alternative approaches exist (e.g., aggregation, dimensionality reduction, or sampling) and may better suit particular applications~\cite{ahmed2019data}. A related field to data summarization is data compression, which emphasizes reducing data size for storage or transmission, either with or without information loss, over capturing the main patterns in the data~\cite{jayasankar2021survey}. The \kr clustering paradigm effectively compresses the data summaries given by centroid-based clustering. Thus, our work pertains to both data summarization and compression. We henceforth use “compression” to refer specifically to the reduction of the size of a data summary. 

Research has leveraged information-theoretic principles to establish a theoretical foundation for understanding clustering as a compression problem, explicitly considering the trade-off between compression, efficiency and clustering quality~\cite{bischof1999mdl,slonim2005information,raj2009information}. However, there is a lack of practical methods designed to enhance the compression capabilities of clustering, a gap we aim to address in this work. We concentrate on two central approaches to centroid-based clustering, \kmeans and deep clustering.

\spara{\kmeansprob clustering.}
\kmeansprob is a seminal problem in clustering~\cite{lloyd1982least}, 
and the classic \kmeans algorithm has established itself as the cornerstone of many clustering algorithms~\cite{jain2008data}. More details on the \kmeansprob problem and on the classic \kmeans algorithm are given in Section~\ref{sec:preliminaries}. Today, \kmeansprob clustering is still an active topic of research~\cite{ikotun2023k}. Nonetheless, prior to our work which extends \kmeans to the \kr paradigm, improving the compression power of \kmeansprob has been largely overlooked.

\spara{Deep clustering.}
Unlike \kmeans, which performs clustering directly in the input space, recent deep clustering algorithms combine clustering with neural networks to learn representations more amenable to clustering~\cite{leiber2023benchmarking,ren2024deep,zhou2025comprehensive}.

Deep clustering algorithms can use various types of neural networks. We focus on autoencoder-based algorithms as they are agnostic to data type and easily extendable~\cite{leiber2025introductory}. Centroid-based deep clustering algorithms are useful for summarization purposes and have demonstrated state-of-the-art performance \cite{miklautz2025breaking}. A well-known representative is \textsc{Deep-k-Means} (\dkm)~\cite{fard2020deep}. \dkm is similar to \kmeans. However, it introduces soft cluster assignments based on the softmax function. \textsc{Deep Embedded Clustering} (DEC)~\cite{xie2016unsupervised} and its successor \textsc{Improved Deep Embedded Clustering} (\idec)~\cite{Guo2017ImprovedDE} utilize the Kullback-Leibler divergence~\cite{kullback1951information} to learn an embedding that aligns a \emph{Student's t} model of the data distribution with a target distribution. Our work extends the \dkm and \idec algorithms to the \kr paradigm. Further details on deep clustering, \dkm and \idec are given in Section~\ref{sec:preliminaries}. 

\spara{Matrix decomposition.}
Clustering is closely related to matrix decomposition, which expresses a matrix as a product of two simpler matrices. It is known that, given a data matrix, the model imposed by a centroid-based clustering algorithm like \kmeans can be seen as a special case of a matrix-decomposition model where one of the factor matrices is constrained to indicate cluster assignments and the other stores centroids~\cite{ding2005equivalence}. Similarly, \kr clustering hinges on known operators like Hadamard (i.e., element\-wise) and \kr products~\cite{khatri1968solutions} and is closely connected to the Hadamard decomposition, which models a data matrix as the Hadamard product of matrix decompositions~\cite{ciaperoni2024hadamard}. In particular, the model imposed by \kr clustering with \semicentroidsword aggregated via element\-wise product on a data matrix is equivalent to a Hadamard decomposition where one matrix in each decomposition is constrained to indicate the assignment to \semicentroidsword. In general, algorithms for matrix decomposition can be useful for clustering~\cite{li2013nonnegative} and can also be potentially extended to the \kr clustering setting.

\section{Preliminaries}\label{sec:preliminaries}

Before we introduce the main ideas behind the proposed methods, we present the notation and preliminary notions that are used throughout the paper. Additionally, we review the essential clustering background required to understand our contributions.

\spara{Notation and basic definitions.}
Vectors are denoted by lower-case bold letters, and matrices by upper-case bold letters. Similarly, sets are denoted by upper-case letters and scalars by lower-case letters. Greek letters are reserved for data-summary parameters. The product of two scalars $a$ and $b$ is denoted by $ab$, while $\A \B$ and $\A \odot \B$ denote the standard and Hadamard matrix products, respectively. The Euclidean norm of $\mathbf{x}$ is denoted by $\| \mathbf{x} \|$. 

We consider datasets $\dataset \subset \mathbb{R}^{\nfeats}$ composed of a set of data points $\{\xbf_1, \xbf_2, \ldots, \xbf_\ndatapoints\}$, where $\xbf_i = (x_1, x_2 \dots, x_\nfeats)$. The goal of clustering is to divide the dataset into $\nclusters$ clusters \( \{\cluster_1, \cluster_2, \ldots, \cluster_k\} \) so that $\dataset=\cup_{i=1}^kC_i$ and $C_i \cap_{i \neq j} C_j = \emptyset$. Furthermore, we denote the set of cluster centroids by $\centroidsset = \{  \centroid_1, \centroid_2, \dots, \centroid_\nclusters \} \subset \mathbb{R}^{\nfeats}$. In the \kr clustering paradigm, we define the centroids in $\centroidsset$ more succinctly by combining \semicentroidsword. Unless stated otherwise, we assume that there are \nclusters centroids and \nsets sets of \semicentroidsword, where the $i$-th set has cardinality $\protocentroidsetcardinality_i$. The \semicentroidsword in different sets are combined using a function referred to as \emph{aggregator} and denoted by \aggregator. While the \kr  clustering paradigm is general and could in principle accommodate any aggregator function, in this work, we focus on the sum and product, i.e., $\aggregator \in \{  + , \times \}$.  When \aggregator is left unspecified, it is understood to represent an arbitrary choice of aggregator. For notational convenience, \aggregator is applied to scalars, vectors, matrices and sets. When applied to vectors and matrices, it is an element\-wise operator, corresponding to the standard sum for $\aggregator = +$ and to the Hadamard product for $\aggregator = \times$. When applied to sets (e.g., sets of \semicentroidsword), it is a so-called \kr  operator. We define the \kr \aggregator operator as an operator that, given \nsets sets of vectors, produces the set of all vectors obtained by element\-wise application of \aggregator to all possible combinations with one vector from each set. The name “Khatri-Rao” operator is chosen since, if \semicentroidsword in each set are arranged as rows of matrices and $\aggregator = \times$, the operator reduces to the \kr product~\cite{khatri1968solutions}. 

\spara{Clustering for data summarization.}
From the perspective of this work, clustering algorithms are functions \(\clusteringfunction: \dataset \rightarrow \allparams
\)
mapping a dataset \dataset to a succinct representation $\allparams$. The function \clusteringfunction is chosen to minimize the size of $\allparams$ while optimizing a measure of quality $\clusteringfunctionquality(\dataset,\allparams)$. All clustering algorithms discussed in this work follow from particular constraints imposed on $\allparams$ as well as different choices of $\clusteringfunctionquality(\dataset,\allparams)$. 

While the \kr paradigm can be applied to any centroid-based clustering algorithm, we focus on two popular approaches to clustering: the seminal \kmeansprob clustering and the emerging centroid-based deep clustering. In the remainder of this section, we review the basic principles underlying both approaches. 

\spara{\kmeans clustering.}
In \kmeansprob, the dataset is summarized in terms of a set of centroids $\allparams = \{ \boldsymbol{\mu}_i\}_{i=1}^\nclusters$, and the measure of quality is the total squared Euclidean distance of each point to the closest centroid, henceforth called \emph{inertia}, as is standard in the literature~\cite{scikit-learn}. 

Let $\centroid_i$ be the centroid of cluster $\cluster_i$. The \kmeansprob clustering problem asks to partition a dataset \( \dataset = \{\xbf_1, \xbf_2, \ldots, \xbf_\ndatapoints \}\) in \( \mathbb{R}^\nfeats \)  
into \( \nclusters \) clusters \( \{\cluster_1, \cluster_2, \ldots, \cluster_k\} \) such that the inertia
\begin{equation}\label{eq:kmeansObjective}
    \mathcal{Q}_C(\dataset, \allparams) = \sum_{i=1}^{\nclusters} \sum_{\xbf \in \cluster_i} \| \xbf - \centroid_i \|^2
\end{equation}
is minimized. 

To address this problem, the standard \kmeans algorithm starts by initializing \nclusters cluster centroids. Centroids can be initialized by sampling data points uniformly at random. Alternatively, the popular $\kmeanspp$~\cite{arthur2007k} strategy chooses data points that are \emph{sufficiently} far apart from each other as initial centroids, which gives theoretical approximation guarantees and often results in performance improvements. After initialization, the \kmeans algorithm iteratively assigns each data point to its nearest (in Euclidean distance) centroid and updates the cluster centroids by computing the mean of the points assigned to each cluster. The algorithm terminates and outputs the current centroids when either the centroids converge to a stable position or a maximum number of iterations is reached. 

\spara{Deep clustering.}
Deep clustering combines clustering with deep neural networks to perform some kind of representation learning. Unlike more traditional methods that rely on fixed features, it jointly optimizes feature extraction and cluster assignment in an end-to-end manner. In this study, we concentrate on autoencoder-based, centroid-based deep clustering approaches. In this setting, the input dataset \dataset is summarized as $\allparams = \allparamsautoencoder \cup \allparamscentroids$, where $\allparamsautoencoder$ and $\allparamscentroids$ denote the autoencoder and centroid parameters, respectively, and the quality function \clusteringfunctionquality to optimize captures the trade-off between clustering quality in the latent space and quality of reconstruction of \dataset from its latent-space representation. 

Formally, given a dataset $\dataset = \{\xbf_1, \xbf_2, \dots, \xbf_n\}$, let $f^e_{\allparamsautoencoder} : \dataset \to Z$ be a parametric mapping (encoder) to latent representations $Z = \{{\bf z}_1, \dots, {\bf z}_n\} \subset \mathbb{R}^{m'}$ and $f^d_{\allparamsautoencoder} : Z \to \hat{\dataset}$ a mapping (decoder) back to the original feature space. Further, let $\mathcal{L}_{\mathrm{rec}}(\dataset, f^d_{\allparamsautoencoder}(Z))$ denote a reconstruction loss and $\mathcal{L}_{\mathrm{cluster}}$ a loss capturing clustering quality in the latent space. Then, the deep clustering problem can be framed as the problem of minimizing
\begin{align}
\label{eq:deepclusteringObjective}
\mathcal{Q}_{C}(\dataset, \allparams) = \mathcal{L}_{\mathrm{cluster}}(Z, \allparamscentroids) + w_{rec}  \mathcal{L}_{\mathrm{rec}}(\dataset, f^d_{\allparamsautoencoder}(Z)),
\end{align}
where  $\allparamscentroids = \{\boldsymbol{\mu}_1, \dots, \boldsymbol{\mu}_\nclusters\}$ denotes the cluster centroids in the latent space and $w_{rec} \ge 0$ balances clustering and representation learning. In our study, we consider two clustering loss functions for $\mathcal{L}_{\mathrm{cluster}}$; the ones proposed for the \dkm~\cite{fard2020deep} and \idec~\cite{Guo2017ImprovedDE} algorithms. Thus,  $\mathcal{L}_{\mathrm{cluster}} \in \{\mathcal{L}_{\mathrm{DKM}}, \mathcal{L}_{\mathrm{IDEC}}\}$, where
\begin{align}
    \label{eq:dkm_loss}
    \mathcal{L}_{\mathrm{DKM}}(Z, \allparamscentroids) = \frac{1}{\ndatapoints} \sum_{z\in Z}\sum_{i=1}^{\nclusters} {||{\bf z} - \centroid_i||}^2 \frac{e^{-a{||{\bf z} - \centroid_i||}^2}}{\sum_{j=1}^{\nclusters} e^{-a{||{\bf z} - \centroid_j||}^2}}
\end{align}
and
\begin{align}
    \label{eq:idec_loss}
    \mathcal{L}_{\mathrm{IDEC}}(Z, \allparamscentroids) = \frac{1}{\ndatapoints} \sum_{l=1}^{\ndatapoints} \sum_{i=1}^{\nclusters} p_{l,i} \log(\frac{p_{l,i}}{q_{l,i}}).
\end{align}
Here, 
\begin{align*}
    q_{l,i}=\frac{(1+{||{\bf z}_l-\centroid_i||}_2^2)^{-\frac{a+1}{2}}}{\sum_{j=1}^{\nclusters}(1+{||{\bf z}_l-\centroid_j||}_2^2)^{-\frac{a+1}{2}}}~\text{and}~  p_{l,i}=\frac{q_{l,i}^2/\sum_{t=1}^{n} q_{t,i}}{\sum_{j=1}^{\nclusters}(q_{l,j}^2/\sum_{t=1}^{n} q_{t,j})},
\end{align*}
with $q_{l,i}$ and  $p_{l,i}$ representing the data and target distributions, respectively, and $a$ is an algorithm-specific parameter that is usually set to $1$ for \idec and $1000$ for \dkm. To obtain a succinct data summary in terms of autoencoder and centroid parameters, the objective in Equation~\eqref{eq:deepclusteringObjective} is typically optimized via batch-wise backpropagation, using automatic differentiation~\cite{leiber2023benchmarking}.

\section{Problem Formulation}\label{sec:problem_formulation}

Given any clustering algorithm \(
\clusteringfunction: \dataset \rightarrow \allparams,
\) we introduce its \kr extension by adopting its quality function $\clusteringfunctionquality(\dataset,\allparams)$ and introducing particular constraints on the parameters $\allparams$ that define the summary of \dataset associated with  \clusteringfunction. 

As the \kr clustering paradigm is designed for centroid-based clustering, all  algorithms that can be framed within the \kr clustering framework use a set of parameters to represent centroids. Thus, in all cases, given a user-specified integer $\nsets$, \kr clustering assumes that each centroid satisfies
\begin{equation}
\label{eq:centroidconstraint}
\krcentroid = \protocentroid^{j_1}_1 \aggregator \protocentroid^{j_2}_2 \dots  \aggregator \protocentroid^{j_\nsets}_\nsets \ \quad \forall i \in [1, \nclusters],
\end{equation}
where $\protocentroid^{j_l}_q$ denotes the $j_l$-th protocentroid vector in the $q$-th set of protocentroids. Accordingly, each centroid is associated with a cluster $\cluster_i$ and is uniquely identified by a tuple of $\nsets$ indices, one for each set of protocentroids, so that $\cluster_i = \cluster_{j_1,j_2, \dots  j_p}$, and similarly  $\centroid_i = \centroid_{j_1,j_2, \dots  j_p}$. Figure~\ref{fig:interactiondiagram} provides a schematic visualization describing how sets of protocentroids are combined to create cluster centroids. Additionally, Figure~\ref{fig:example_centroids} shows examples in synthetic data\footnote{Bottom dataset available at \url{https://github.com/milaan9/Clustering-Datasets}}. 

All algorithms based on the \kr clustering paradigm yield  succinct representations of centroid parameters by enforcing the constraint from Equation~\eqref{eq:centroidconstraint}. 
In some cases, as in the deep clustering setting, there are additional parameters (the autoencoder parameters) that \kr clustering seeks to reduce. 

\begin{figure}
    \centering
%\scalebox{0.8}{\input{figures/interaction_diagrams.tikz}}
	  \includegraphics[width=0.27\textwidth]{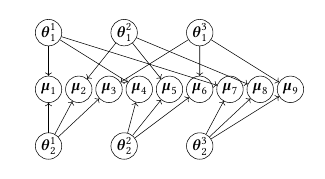}
    \caption{Diagram showing the interactions of two sets of \semicentroidsword to generate cluster centroids.}
    \label{fig:interactiondiagram}
\end{figure}

\begin{figure}[t!]
    \centering
    \begin{tabular}{c|c}
     \includegraphics[width=0.18\textwidth]{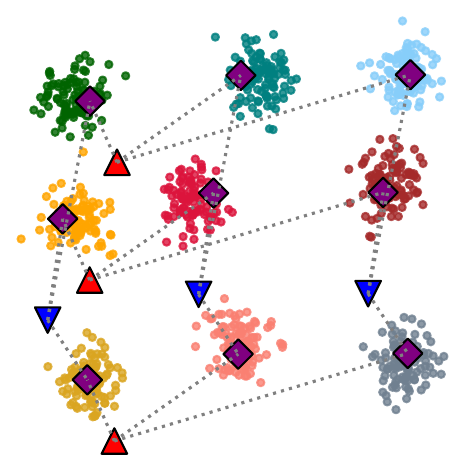}  & \includegraphics[width=0.18\textwidth]{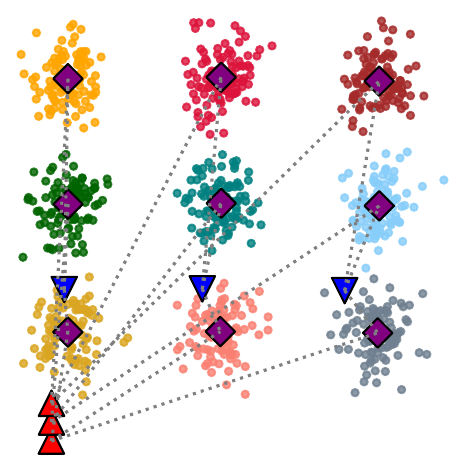} \\
     $\aggregator = +$ & $\aggregator = \times$ \\
    \midrule
\includegraphics[width=0.18\textwidth]{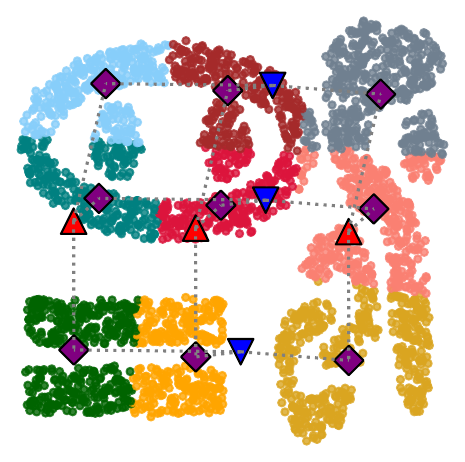}  & \includegraphics[width=0.18\textwidth]{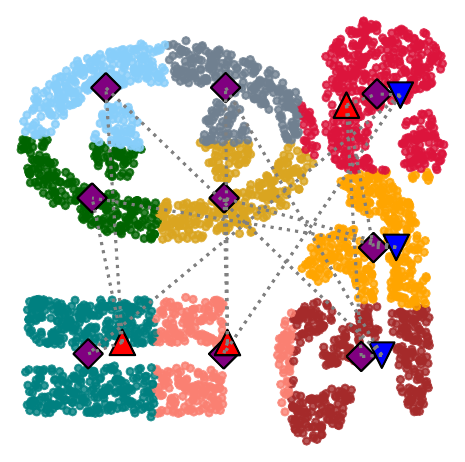} \\
     $\aggregator = +$ & $\aggregator = \times$ \\   
    \end{tabular}
    \caption{\kr-based (top) and arbitrarily-structured (bottom) synthetic data. Combining additively ($\aggregator=+$) or multiplicatively ($\aggregator=\times$) the first (red triangles) and second (blue triangles)  sets of \semicentroidsword yields the cluster centroids (purple diamonds). Gray lines indicate which \semicentroidword affects each cluster centroid.}
\label{fig:example_centroids}
\end{figure}

Although \kr clustering defines a general paradigm for centroid-based clustering, we focus on \kmeans and two deep clustering algorithms. \kr extensions of other centroid-based clustering algorithms, e.g., based on matrix decomposition~\cite{li2013nonnegative} or gradient descent~\cite{DBLP:conf/www/Sculley10}, are possible but require method-specific adjustments. Investigating whether our ideas can also be applied to clustering algorithms that do not rely on centroids, such as hierarchical clustering~\cite{ghosal2020short}, is left to future research. 

\subsection{\kr \kmeansprob Clustering}\label{sec:hadkmeansproblem}
In \kmeansprob clustering (introduced in Section~\ref{sec:preliminaries}), the dataset is summarized using a set of prototype vectors, i.e., the centroids $\allparams=\kmeanscentroids$, and the objective function \clusteringfunctionquality is the inertia in Equation~\eqref{eq:kmeansObjective}. 
%which quantifies the variance within each cluster. 

\kr \kmeansprob also seeks to minimize the inertia. However, it restricts all centroids $\allparams$ to arise from \kr aggregations of \semicentroidsword, so that $\allparams =\{  \protocentroid^{j_i}_l \mid j_i \in [1, \card_l]~\text{ and }~l \in [1,p] \}$. Formally, the \ourkmeans problem is defined as follows. 

\begin{problem}[\ourkmeans]\label{prob:kr_kmeans_problem}
    Given a dataset and an input $\nsets \in \mathbb{N}_+$, partition $\dataset$ into 
    $\prod_{l=1}^{\nsets} \card_l$
    %$\card_1 \card_2 \dots 
    %  \card_\nsets$ 
    clusters so as to minimize
\begin{equation*}\label{eq:khatri_rao_kmeans_objective}
\mathcal{Q}_{C}(\dataset, \allparams) = 
\sum_{j_1=1}^{\card_1}\sum_{j_2=1}^{\card_2}
    \dots
    \sum_{j_\nsets=1}^{\card_\nsets}
    \sum_{\mathbf{x} \in \cluster_{j_1,j_2, \dots  j_\nsets} }  \| \mathbf{x} -\protocentroid^{j_1}_1 \aggregator \protocentroid^{j_2}_2 \dots  \aggregator \protocentroid^{j_\nsets}_\nsets \|^2.
\end{equation*}
\end{problem} 

The classic \kmeansprob problem is known to be \np-Hard~\cite{dasgupta2008hardness}. Any instance of \kmeansprob can be mapped to an instance of \ourkmeans simply by setting $\nsets=1$ and $\card_1 = \nclusters$. Thus, Problem~\ref{prob:kr_kmeans_problem} is also \np-hard.

In view of this hardness, as it is customary for clustering methods, in Section~\ref{sec:algorithms_shallow}, 
we devise effective algorithms \clusteringfunction that do not seek a globally optimal solution.
%we devise algorithms \clusteringfunction for the problem that are effective but do not seek a globally-optimal solution. 
Figure~\ref{fig:diagramkr} (\textbf{a})  provides a schematic representation of the \ourkmeans problem. 

\subsection{\kr Deep Clustering}\label{sec:deepclusteringproblem} As explained in Section~\ref{sec:preliminaries}, centroid-based deep clustering algorithms \clusteringfunction map datasets \dataset to summaries in the form $\allparams = \allparamsautoencoder \cup \allparamscentroids$, where $\allparamsautoencoder$ and $\allparamscentroids$ are autoencoder and centroid parameters, respectively.

The \kr extension of the deep clustering problem adopts the same objective function \clusteringfunctionquality, while enforcing  constraints on \allparams to craft a succinct representation. As in \kr \kmeansprob, (latent-space) centroids are conjectured to follow the \kr structure, i.e.: 
$\allparams_\mu =\{  \protocentroid^{j_i}_l \mid j_i \in [1, \card_l]~\text{ and }~l \in [1,p] \}$.
In addition, we also enforce a succinct representation of the autoencoder parameters.

In the autoencoder-based deep clustering setting, autoencoders consist of an \emph{encoder} $f^e_{\allparamsautoencoder}  : \dataset \to Z$  mapping data to latent representations $Z = \{\mathbf{z}_1, \dots, \mathbf{z}_n\}$, and a \emph{decoder} $f^d_{\allparamsautoencoder}  : Z \to \hat{\dataset}$, reconstructing data from their latent representations. We consider an autoencoder with $\nlayers$ layers. The $l$-th layer is parameterized by a matrix $\autoencoderweightmatrixl \in \mathbb{R}^{d_l \times \nfeats_l}$, which could be relatively large. To obtain a succinct representation of the output of deep clustering algorithms, it is thus not sufficient to compress the centroid parameters, but it is also necessary to compress the autoencoder parameters $\allparamsautoencoder = \{ \autoencoderweightmatrixl \}_{l=1}^{\nlayers}$. To this aim, we draw on the connection between \kr clustering and Hadamard decomposition~\cite{ciaperoni2024hadamard} discussed in Section~\ref{sec:related}, and we reparametrize matrices $\autoencoderweightmatrixl$ as Hadamard products of \nhadamardfactors  factors, i.e.:
\begin{equation}\label{eq:hadamard_repa}
\autoencoderweightmatrixl = (\amat^l_1 \bmat^l_1) \odot (\amat^l_2 \bmat^l_2) \dots (\amat^l_\nhadamardfactors \bmat^l_\nhadamardfactors) \quad \ \forall l \in [1, \nlayers], 
\end{equation}
where $\amat^l_i \in \mathbb{R}^{d_l \times \actualrank_i}$ and $\bmat^l_i \in \mathbb{R}^{\actualrank_i \times \nfeats_l}$. The key intuition behind such reparameterization mirrors that of \kr clustering. The Hadamard product of \nhadamardfactors matrices of ranks $\actualrank_1, \actualrank_2, \dots$ and $\actualrank_\nhadamardfactors$ can represent matrices of rank up to $\prod_{l=1}^\nhadamardfactors \actualrank_l$ while using only $2 \sum_{l=1}^\nhadamardfactors  \actualrank_l$
vectors. In practice, this provides an effective compression mechanism, as also demonstrated in the context of computer vision in federated learning environments~\cite{hyeon2021fedpara}, where a similar reparameterization is considered. 

Having illustrated the constraints that \kr deep clustering poses on \allparamscentroids and \allparamsautoencoder,  we can now formally state the \kr deep clustering problem. 

\begin{problem}\label{prob:krdeepclustering}
Given a dataset $\dataset$, a parametric mapping $f^e_{\allparamsautoencoder}  : \dataset \to Z$ to latent representations $Z = \{\mathbf{z}_1, \dots, \mathbf{z}_n\}$, a mapping $f^d_{\allparamsautoencoder}  : Z \to \hat{\dataset}$ back to the original feature space and input $p \in \mathbb{N}_+$, partition $\dataset$ into $  \card_1 \card_2 \dots \card_\nsets$ clusters so as to minimize
\begin{align*}
\label{eq:kr_deepclusteringObjective}
\mathcal{Q}_{C}(\dataset, \allparams) = \mathcal{L}_{\mathrm{cluster}}(Z, \allparamscentroids) + w_{rec}  \mathcal{L}_{\mathrm{rec}}(\dataset, f^d_{\allparamsautoencoder}(Z)),
\end{align*}
where $\allparams_\mu = \{ \protocentroid^{j_i}_l \mid j_i \in [1, h_l] ~\text{and}~ l \in [1, \nsets]\}$ and $ \allparamsautoencoder = \{ \amat^i_j, \bmat^i_j \mid i \in [1, \nlayers]~\text{and}~j \in [1, \nhadamardfactors]\}.$
\end{problem}

Considering the loss $\mathcal{L}_{cluster}$ in Equation~\eqref{eq:dkm_loss} optimized by \dkm, it can be defined as follows for \kr \dkm:
\begin{align*}
\begin{split}
        \mathcal{L}_{\mathrm{KR-DKM}}(\dataset, \allparamscentroids) = \frac{1}{\ndatapoints} &\sum_{z\in Z} 
    \sum_{j_1=1}^{\card_1} \dots \sum_{j_\nsets=1}^{\card_\nsets}
    {||{\bf z} - \protocentroid^{j_1}_1
   \dots \aggregator \protocentroid^{j_\nsets}_\nsets||}^2 \cdot \\
   & \frac{e^{-a{||{\bf z} - \protocentroid^{j_1}_1 
   \dots \aggregator \protocentroid^{j_\nsets}_\nsets||}^2}}{\sum_{l_1=1}^{\card_1} \dots \sum_{l_\nsets=1}^{\card_\nsets} e^{-a{||{\bf z} - \protocentroid^{l_1}_1 
   \dots \aggregator \protocentroid^{l_\nsets}_\nsets||}^2}}.
\end{split}
\end{align*}
Adjusting the loss $\mathcal{L}_{cluster}$ in Equation \eqref{eq:idec_loss} for the \idec algorithm to the \kr  paradigm follows the same logic. Figure~\ref{fig:diagramkr} (\textbf{b}) summarizes the  \kr deep clustering problem. Our solution to the problem is presented in Section~\ref{sec:deep_clustering}.  

\begin{figure}
    \centering
    %\begin{tabular}{c|c}
    % \resizebox{0.4\linewidth}{!}{\input{figures/khatri_rao_diagram_kmeans.tikz}} &  
    %\resizebox{0.5\linewidth}{!}{\input{figures/khatri_rao_diagram_deep.tikz}}  \\
    %     \textbf{(a)} &  \textbf{(b)}
    %\end{tabular}
     \includegraphics[width=0.45\textwidth]{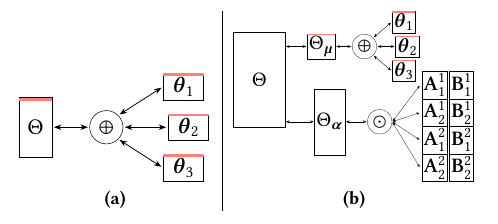}
    \caption{Diagram summarizing the Khatri-Rao clustering paradigm in the \kmeans \textbf{(a)} and deep clustering \textbf{(b)} settings; in this example, $\nsets = 3$, $\nlayers=2$ and $\nhadamardfactors=2$. The centroid in red is obtained by aggregating the \semicentroidsword in red.}
    \label{fig:diagramkr}
\end{figure}

\section{Naïve Khatri-Rao Clustering}\label{sec:naive}

As anticipated in Section~\ref{sec:introduction}, a naïve approach 
to Khatri-Rao clustering first applies a standard clustering algorithm (e.g., \kmeans) to obtain an initial set of centroids, and then post-processes these centroids to impose the succinct \kr structure.

In this section, we describe such a naïve solution to Khatri-Rao clustering. As for the rest of the algorithms we discuss, for clarity of exposition, we consider the simple case of two sets of \semicentroidsword, i.e., $\nsets=2$. The extension 
to the general formulation is straightforward but more cumbersome in notation. 

Let \centroidsset denote the set of centroids output by a centroid-based clustering algorithm like \kmeans. Given \centroidsset, we aim to find the closest sets of \semicentroidsword that yield \centroidsset upon aggregation, which translates into the following optimization problem:
\begin{equation*}
\label{eq:naiveproblem}        \min_{\{\protocentroid^i_1\}_{i=1}^{\card_1},\, \{\protocentroid^j_2\}_{j=1}^{\card_2} } \sum_{i,j} \| \centroid_{i,j} - \protocentroid^i_1 \aggregator \protocentroid^j_2 \|^2. 
\end{equation*}

This problem formulation suggests that \semicentroidsword could potentially be estimated via an alternating gradient-descent procedure where, at each iteration, \semicentroidsword from one set are updated by taking a step in the opposite direction of the gradient.

If the aggregator function \aggregator is the sum or the product, gradients can be computed in closed form. For example, assume that \aggregator is the product. Then, at any given iteration, the gradient with respect to, e.g., the $i$-th protocentroid  in the first set $\protocentroid_1^i$ would be
\begin{equation}
\label{eq:gradient_naive}
    2 \sum_{j=1}^{\card_2} 
    \left( 
        \centroid_{i,j} - \protocentroid^i_1 \odot \protocentroid^j_2 
    \right)
    \odot 
    \protocentroid^j_2, 
\end{equation}
where we have used the fact that only $\card_2$ centroids are affected by $\protocentroid_1^i$ (i.e., those corresponding to the combination of $\protocentroid_1^i$ with all \semicentroidsword in the second set), and hence contribute to the gradient. The gradient in Equation~\eqref{eq:gradient_naive} implies that, at any given iteration, the update rule for the  $i$-th \semicentroidword in the first set  $\protocentroid_1^i$, obtained by equating the corresponding gradient to \zerovector, is
\begin{equation}
\label{eq:update_naive}
    \protocentroid^i_1 
    \gets 
    \frac{
        \sum_{j=1}^{\card_2} 
        \centroid_{i,j} \odot \protocentroid^j_2
    }{
        \sum_{j=1}^{\card_2} 
        \protocentroid^j_2 \odot \protocentroid^j_2
    }.
\end{equation}
Update rules for the sum aggregator and alternative configurations of \semicentroidword sets can be obtained analogously. 

\spara{Limitations.}
While the gradient-descent algorithm described above can be effective for estimating \semicentroidsword associated with a given set of centroids, the performance of the two-phase naïve approach to Khatri-Rao clustering is often far from satisfactory. This is because the centroids obtained in the first step may accurately describe the dataset, yet be arbitrarily far from a \kr structure. As a consequence, the accuracy of the data summary found in the first phase can be destroyed in the second phase. 

Rather than using a two-phase approach, we need algorithms that simultaneously optimize  clustering and enforce the Khatri-Rao structure. The next section illustrates one such algorithm.

\section{The \hadkmeans Algorithm}\label{sec:algorithms_shallow}

In this section, we present the \hadkmeans algorithm, an extension of the standard \kmeans algorithm that addresses Problem~\ref{prob:kr_kmeans_problem} while overcoming the limitations of the naïve approach. The key steps of \hadkmeans are similar, in spirit, to those of standard \kmeans outlined in Section~\ref{sec:preliminaries}. However, \hadkmeans leverages the Khatri-Rao structure to achieve a more succinct representation of the centroids, which requires modifying the steps of standard \kmeans. 

\begin{algorithm}[t] 
\caption{\hadkmeans \\ \hspace*{1.8cm}(for two sets of \semicentroidsword, i.e., $\nsets=2$)}
\label{alg:hadkmeans}
\begin{algorithmic}[1]
\State \textbf{Input:} dataset $\dataset = \{ \xbf_i \}_{i=1}^\ndatapoints $, \semicentroidword sets cardinalities $\card_1, \card_2$, maximum number of iterations $n_{iter}$, tolerance $\tol$.
\State \textbf{Output:} assignments $\cluster_{\cdot,\cdot}$, \semicentroidsword $\protocentroids_1, \protocentroids_2$. 
\State $\protocentroid_1 \gets \text{SampleRandomly}(\dataset, \card_1)$\Comment{sample $\card_1$ points from $\dataset$}
\State $\protocentroid_2 \gets \text{SampleRandomly}(\dataset, \card_2)$ \Comment{sample $\card_2$ points from $\dataset$}
\State $\protocentroids_1^{\text{old}} \gets \protocentroids_1$, \ $\protocentroids_2^{\text{old}} \gets \protocentroids_2$
\For{$it = 1$ \textbf{to} $n_{iter}$}
    \State $\abf_1 \gets \mathbf{0}$, \ $\abf_2 \gets \mathbf{0}$, \ $\mathbf{d}_{\min} \gets \mathbf{\infty}$
    \For{$i = 1$ \textbf{to} $\card_1$}
        \For{$j = 1$ \textbf{to} $\card_2$}
            \State $\boldsymbol{\mu}_{ij} \gets \protocentroids_1^i \aggregator \protocentroids_2^j$\Comment{centroid computed on the fly} 
            \State $\mathbf{d}_{ij} \gets \|\dataset - \boldsymbol{\mu}_{ij}\|^2$\label{line:bottleneck} \Comment{compute distances}
            \State  $\mathbf{F} \gets (\mathbf{d}_{ij} < \mathbf{d}_{\min})$ \Comment{flag relevant samples}
            \State  $\abf_1[\mathbf{F}] \gets i$, \ $\abf_2[\mathbf{F}] \gets j$ \Comment{update assignments}
            \State  $\mathbf{d}_{\min}[\mathbf{F}] \gets \mathbf{d}_{ij}[\mathbf{F}]$
            \EndFor
    \EndFor
    \State $\cluster_{i,j} \gets \{\xbf_t \mid t \in \{1,2\dots n\} \wedge i = \abf_{1_t} \wedge j = \abf_{2_t}\}$
    
    \For{$i = 1$ \textbf{to} $\card_1$}\Comment{update \semicentroidsword}
        \State $\protocentroid_1^i\gets 
        \argmin_{\protocentroid_1} \sum_{l=1}^{\card_2} 
       \sum_{\xbf \in \cluster_{i,l}} (\xbf - \protocentroid_1  \aggregator \protocentroid_2^l)^2$  
       \label{line:bottleneck1} 
    \EndFor
    \For{$j = 1$ \textbf{to} $\card_2$}
        \State $\protocentroid_2^j \gets 
        \argmin_{\protocentroid_2} \sum_{l=1}^{\card_1} 
       \sum_{\xbf \in \cluster_{l,j}} (\xbf - \protocentroid_1^l  \aggregator \protocentroid_2)^2$ \label{line:bottleneck2}
    \EndFor
    \State $ \Delta \gets \sum_{i=1}^{\card_1} \sum_{j=1}^{\card_2} \left\| \protocentroids_1^i \;\oplus\; \protocentroids_2^j \;-\; \protocentroids_{1}^{\text{old},i} \;\oplus\; \protocentroids_{2}^{\text{old},j} \right\|^2 $
    \If{$\Delta < \tol$}\Comment{check stopping condition}
        \State \textbf{break}
    \EndIf
    %\State \textbf{if} $\Delta < \tol$ \textbf{break} \Comment{check stopping condition}
    \State $\protocentroids_1^{\text{old}} \gets \protocentroids_1$, \ $\protocentroids_2^{\text{old}} \gets \protocentroids_2$
\EndFor       
\State \textbf{Return:} assignments $\cluster_{\cdot,\cdot}$ and \semicentroidsword $\protocentroids_1, \protocentroids_2$. 
\end{algorithmic}
\end{algorithm}

A detailed description of the steps of \hadkmeans follows, and Algorithm~\ref{alg:hadkmeans} summarizes them. For clarity, we again focus on the scenario where we have two sets of \semicentroidsword. 

\spara{Initialization.}
Instead of centroids, \hadkmeans starts by initializing \semicentroidsword. As for standard \kmeans, a simple initialization strategy involves choosing initial \semicentroidsword by sampling data points uniformly at random. Alternatively, we can adapt the more effective initialization of \kmeanspp to be compatible with \hadkmeans. To achieve this, we sample $\card_1 + \card_2$ centroids $\centroid_{j_1, j_2}$ such that they are far from each other, and for each sampled centroid, we generate two \semicentroidsword such that 
\(
\centroid_{j_1, j_2} = \protocentroid^{j_1}_1 \aggregator \protocentroid^{j_2}_2  
\). 
Because of the constraints imposed in Khatri-Rao clustering, the remaining initial $\card_1 \card_2  - (\card_1 + \card_2)$ centroids are determined by the choice of the first $\card_1 + \card_2$. For simplicity, in Algorithm~\ref{alg:hadkmeans},  random sampling is used at the initialization stage. 

\spara{Centroid computation.}
The centroids are obtained on the fly at each iteration by simply aggregating \semicentroidsword according to the chosen aggregator function. 

\spara{Assignment update.}
Following the update of the centroids, at each iteration, just like \kmeans, \hadkmeans updates cluster assignments based on the latest set of centroids. This step is accomplished by assigning each observation to the centroid it is closest to according to the Euclidean distance. 

Since each centroid index is uniquely associated with a tuple of protocentroid indices, the assignments of data points to \semicentroidsword readily follow. 

\spara{Protocentroid update.}
The standard \kmeans algorithm updates cluster centroids by computing cluster means since the cluster means minimize the sum of squared distances to the centroid within each cluster (i.e., the cluster-specific contribution to the total inertia) based on the latest cluster assignments. Likewise, \hadkmeans updates \semicentroidsword by considering the same optimization. Nevertheless, as a consequence of the constraints imposed on the centroids, the optimal updates for \hadkmeans are not obtained by merely averaging, as stated in Proposition~\ref{prop:closed_form_updates} (proof in Appendix~C) for the sum and product aggregators.

\begin{proposition}\label{prop:closed_form_updates}
The optimal updates of the $j$-th \semicentroidword in the first and second set of \semicentroidsword at any iteration of \hadkmeans are given by 
\begin{align*}
       \protocentroid^{j}_1 = 
       \frac{
       \sum_{l=1}^{\card_2} 
       \sum_{\xbf \in \cluster_{j,l}} \xbf \odot \protocentroid^l_2 } 
        {
\sum_{l=1}^{\card_2} 
|\cluster_{j,l}|
       {\protocentroid^l_2} \odot {\protocentroid^l_2}
        }
         \text{ and }
        \protocentroid^{j}_2 = 
       \frac{
       \sum_{l=1}^{\card_1} 
       \sum_{\xbf \in \cluster_{l,j}} \xbf \odot \protocentroid^l_1 } 
        {
        \sum_{l=1}^{\card_1}
       |\cluster_{l,j}|
       {\protocentroid^l_1} \odot {\protocentroid^l_1}
        },
\end{align*} 
if $\aggregator = \times$, or if 
$\aggregator = +$: 
\begin{align*}
       \protocentroid^{j}_1 = 
       \frac{
       \sum_{l=1}^{\card_2} 
       \sum_{\xbf \in \cluster_{j,l}} (\xbf - \protocentroid^l_2) } 
        {
        \sum_{l=1}^{\card_2} 
        | \cluster_{j,l} | 
        }
         \text{ and }
        \protocentroid^{j}_2 = 
       \frac{
       \sum_{l=1}^{\card_1} 
       \sum_{\xbf \in \cluster_{l,j}} (\xbf - \protocentroid^l_1) } 
        {
        \sum_{l=1}^{\card_1} 
        | \cluster_{l,j} | 
        }. 
\end{align*}
\end{proposition}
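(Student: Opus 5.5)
We treat the protocentroid update as a block-coordinate minimization of the inertia in Problem~\ref{prob:kr_kmeans_problem} with $\nsets=2$. The assignments $\cluster_{\cdot,\cdot}$ and the other set of \semicentroidsword are held fixed. We do the first set; the second follows by exchanging the roles of the two sets.

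\emph{Step 1: isolate the relevant terms.} With assignments and $\{\protocentroid_2^l\}$ fixed, $\protocentroid_1^j$ enters only through the clusters $\cluster_{j,l}$, $l=1,\dots,\card_2$. The objective therefore splits into a function of $\protocentroid_1^j$ alone,
\[
g(\protocentroid_1^j)=\sum_{l=1}^{\card_2}\sum_{\xbf\in\cluster_{j,l}}\|\xbf-\protocentroid_1^j\aggregator\protocentroid_2^l\|^2 ,
\]
plus terms independent of $\protocentroid_1^j$. The function $g$ is also separable across the $\nfeats$ coordinates, since both $+$ and $\times$ act elementwise. So it suffices to minimize, for each coordinate $t$, a one-dimensional function of the scalar $\theta=\protocentroid_1^j[t]$.

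\emph{Step 2: solve each scalar problem.} For $\aggregator=\times$ the coordinate-$t$ term is $\sum_l\sum_{\xbf\in\cluster_{j,l}}(x_t-\theta\,\protocentroid_2^l[t])^2$. This is a convex quadratic in $\theta$, i.e.\ a least-squares problem with nonnegative leading coefficient $\sum_l|\cluster_{j,l}|\,(\protocentroid_2^l[t])^2$. Setting the derivative to zero gives
\[
\theta\sum_l|\cluster_{j,l}|(\protocentroid_2^l[t])^2=\sum_l\sum_{\xbf\in\cluster_{j,l}}x_t\,\protocentroid_2^l[t].
\]
Written coordinatewise with $\odot$ and elementwise division, this is the claimed formula; it mirrors Equation~\eqref{eq:gradient_naive}, with data points replacing centroids. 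For $\aggregator=+$ the term is $\sum_l\sum_{\xbf\in\cluster_{j,l}}(x_t-\protocentroid_2^l[t]-\theta)^2$. This is the squared distance of $\theta$ to the shifted points $x_t-\protocentroid_2^l[t]$, so it is minimized by their mean, which gives the stated average of $\xbf-\protocentroid_2^l$ over $\sum_l|\cluster_{j,l}|$ points.

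\emph{Step 3: justify global optimality.} In both cases $g$ is a convex quadratic, so the stationary point is the global minimizer whenever the leading coefficient is positive. The only delicate point is degeneracy. If every $\cluster_{j,l}$ is empty, then $g$ does not depend on $\protocentroid_1^j$ and any value is optimal. In the product case the same holds for a coordinate $t$ where $\protocentroid_2^l[t]=0$ for all $l$ with $\cluster_{j,l}\neq\emptyset$. I would state these as the implicit nondegeneracy assumptions under which the formula is well-defined, and note that otherwise any value, for instance the previous one, is optimal. Apart from this caveat the argument is routine differentiation.
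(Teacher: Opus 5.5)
Your proposal is correct and follows essentially the same route as the paper's proof: isolate the terms involving $\boldsymbol{\theta}_1^j$, set the gradient to zero, solve, and treat the second set and the sum aggregator as symmetric or analogous. Your version is more careful than the paper's in two respects. The coordinatewise convexity argument actually justifies that the stationary point is the minimizer. Your nondegeneracy caveat (all $\cluster_{j,l}$ empty, or, in the product case, a coordinate where every relevant second-set protocentroid vanishes) identifies exactly when the stated quotient is undefined, a case the paper's ``if and only if'' silently assumes away.
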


\spara{Termination.}
Like \kmeans, \hadkmeans stops when either the protocentroids move less than a tolerance threshold $\tol$ or a maximum number of iterations $n_{iter}$ is reached. 

\spara{Complexity.}
The updates of the \semicentroidsword, i.e., Line~\ref{line:bottleneck1} and Line~\ref{line:bottleneck2} in Algorithm~\ref{alg:hadkmeans}, can be efficiently computed in closed form as per Proposition~\ref{prop:closed_form_updates}. In practice, the computations are sped up by keeping track of the assignments of each data point for both sets of \semicentroidsword and only considering points that are assigned to the \semicentroidword to update. 

Like for the standard \kmeans algorithm, the computational bottleneck is the computation of the distances for the assignment in Line~\ref{line:bottleneck}. \hadkmeans with $\card_1 + \card_2$ \semicentroidsword has the same time complexity as \kmeans with $\card_1 \card_2$ centroids, namely $\bigO( \ndatapoints \nfeats \card_1 \card_2 )$ per iteration. Considering dataset storage, \hadkmeans requires $\mathcal{O}\big((\ndatapoints +  \card_1 + \card_2) \nfeats\big)$ space, which is less than the $\mathcal{O}( (\ndatapoints + \card_1 \card_2) \nfeats)$ required by \kmeans, as long as $\card_1$ and $\card_2$ are larger than $2$. Hence, \hadkmeans can be more space-efficient than \kmeans in applications with a large number of clusters. If memory is not a concern, one can opt for a time-efficient implementation of \hadkmeans which stores the full set of $\card_1 \card_2$ centroids (implementation details in Appendix~B). Section~\ref{sec:scalibility} presents an empirical scalability analysis that corroborates the discussion of time and space complexity.

\spara{Limitations.}
As demonstrated empirically in Section~\ref{sec:experiments}, \hadkmeans\
can provide a more accurate summary of the data than standard \kmeans which uses the same number of parameters. However, \hadkmeans has a significant limitation. As it can also be seen from Figures~\ref{fig:interactiondiagram}~and~\ref{fig:example_centroids}, given two sets of $\card_1$ and $\card_2$ \semicentroidsword, each update of a \semicentroidword in the first set (Line~\ref{line:bottleneck1}) affects the position of $\card_2$ \semicentroidsword ~in the second set (Line~\ref{line:bottleneck2}), and vice versa. In the standard \kmeans algorithm, instead, centroids for a given assignment are updated independently of each other. The additional rigidity of \hadkmeans makes it more likely to converge to poor local minima compared to standard \kmeans. 

To overcome the lack of flexibility of \hadkmeans, we start from \hadkmeans and turn to deep clustering, relying on representation learning to perform clustering in a latent space which exhibits both a strong cluster structure and the Khatri-Rao structure, as discussed in the following section.

\section{Khatri-Rao Deep Clustering Algorithms}\label{sec:deep_clustering}

The Khatri-Rao deep clustering problem (Problem~\ref{prob:krdeepclustering}) requires optimizing a deep-clustering loss function \clusteringfunctionquality while compressing the centroid parameters $\allparams_{\centroid}$ and the autoencoder parameters $\allparamsautoencoder$ of a deep clustering algorithm. Unlike for \hadkmeans, addressing the Khatri-Rao deep clustering problem does not require the introduction of a completely novel machinery. The Khatri-Rao deep clustering framework extends standard deep clustering without requiring major adjustments. 

\spara{Initialization.}
The initialization of the centroids in the latent space plays an important role in driving the performance of deep clustering algorithms. To find a high-quality set of initial centroids, many deep clustering algorithms, including \dkm and \idec, rely on algorithms like \kmeans. Therefore, for the same goal, it is natural for algorithms based on the \krdcfull framework to use \hadkmeans.  

\spara{Reparameterization.}
From an optimization standpoint, the trainable parameters in deep clustering are the autoencoder and centroid parameters. To ensure that the constraints on those parameters placed by the Khatri-Rao deep clustering problem are met, the Khatri-Rao deep clustering framework reparameterizes standard deep clustering imposing the Hadamard-decomposition structure on the autoencoder parameters and the Khatri-Rao structure on the centroid parameters. 
As in standard deep clustering, all parameters are optimized via batch-wise backpropagation. 

From a computational standpoint, \kr deep clustering reduces trainable parameters at the cost of few additional operations.

Our experiments reveal that the \krdcfull framework reduces the size of data summaries found by standard deep clustering algorithms by more than $50\%$ on average in the datasets we consider, at little or no cost in accuracy.

\section{Design Choices in \kr Clustering}\label{sec:designchoices}

In standard clustering, the number of clusters and centroids is either specified based on domain knowledge, determined by practical constraints (e.g., the available memory for storing centroids), or estimated through data-driven procedures. Given a desired number of centroids to be represented, \kr clustering requires choosing the cardinality  of each set of \semicentroidsword, the number of sets and the aggregator function. \kr deep clustering also requires choosing the number of Hadamard factors and the rank of each factor. In this section, we address these choices, and we also explain how \kr clustering can integrate existing techniques to estimate the number of clusters.

\spara{Choosing the cardinality of sets of protocentroids.}
To maximize the number of centroids that can potentially be represented, it is convenient to consider \semicentroidword sets that are as balanced as possible. In general, $\nsets$ sets of $\card_1, \card_2, \dots \card_\nsets$ \semicentroidsword can represent $\prod_{i=1}^\nsets \card_i$ centroids. Given a budget $\budget$ of $\nsets \card$ vectors, allocating them in $\nsets$ sets can represent $\card^\nsets$ centroids. Any other allocation results in the potential representation of $\prod_{i=1}^\nsets \card_i < \card^\nsets$ centroids. 

Possible advantages over standard clustering arise whenever $\prod_{i=1}^\nsets \card_i > \sum_{i=1}^\nsets \card_i$. For instance, two sets of two \semicentroidsword can represent four centroids, yielding no advantage. 

\spara{Choosing the number of sets of protocentroids.}
In all Khatri-Rao clustering algorithms, the number \nsets of \semicentroidsword plays an important role. In this work, we primarily focus on the case of $\nsets=2$, which we recommend as the default choice. Increasing the value of \nsets renders the optimization problem more challenging and hinders interpretability. However, larger compression gains may in principle be obtained by considering $\nsets>2$. For example, given a budget of $12$ vectors to represent centroids, allocating them in $2$ and $3$ sets of equal size can represent $36$ and $64$ centroids, respectively. As explained in the previous paragraph, sets of equal size maximize the number of centroids that can be represented.  In addition, Proposition~\ref{prop:maximizer}, proved in Appendix~C, 
characterizes the value of $\nsets$ maximizing the number of centroids that can be represented. 
\begin{proposition}\label{prop:maximizer}
    Given a fixed budget of vectors $\budget$ to represent centroids, among possible divisors of \budget, the number $\nsets^{max}$ of \semicentroidword sets of equal size that maximizes the number of centroids that can be represented is one of the two divisors of $\budget$ that are closest to $\frac{\budget}{e}$, where $e$ is the natural logarithm base.
\end{proposition}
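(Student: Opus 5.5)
Fix the budget $\budget$ and suppose it is split into $\nsets$ sets of equal size, with $\nsets$ a divisor of $\budget$. Each set then has cardinality $\card = \budget/\nsets$, and the number of representable centroids is
\[
g(\nsets) = \left(\frac{\budget}{\nsets}\right)^{\nsets}.
\]
The plan is to extend $g$ to a function of a real variable $x>0$, show that it is unimodal with its maximum at $x=\budget/e$, and then use unimodality to restrict the integer maximizer over the divisors of $\budget$.

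Since $\log$ is increasing, it suffices to work with
\[
\phi(x)=\log g(x) = x\log \budget - x\log x .
\]
Its derivative is $\phi'(x)=\log \budget - \log x - 1$, which vanishes only at $x=\budget/e$. It is positive for $x<\budget/e$ and negative for $x>\budget/e$. Moreover $\phi''(x)=-1/x<0$, so $\phi$ is strictly concave. Hence $g$ is strictly increasing on $(0,\budget/e]$ and strictly decreasing on $[\budget/e,\infty)$.

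Now restrict to the divisors of $\budget$. Let $d_-$ be the largest divisor with $d_-\le \budget/e$ and $d_+$ the smallest divisor with $d_+\ge \budget/e$; these are the two divisors of $\budget$ closest to $\budget/e$ in the sense that matters here, one on each side. For every divisor $d<d_-$, monotonicity gives $g(d)<g(d_-)$. For every divisor $d>d_+$, it gives $g(d)<g(d_+)$. So the maximum over divisors is attained at $d_-$ or at $d_+$. If $\budget/e$ were itself a divisor the two would coincide, but $e$ is irrational, so this never happens.

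The main obstacle is the phrase ``the two divisors closest to $\budget/e$''. Read as the two divisors nearest in absolute distance, the claim can fail, since both could lie on the same side of $\budget/e$. The correct reading is the nearest divisor on each side, and I would state that interpretation explicitly. Deciding which of $d_-$ and $d_+$ wins cannot be settled from unimodality alone, because $\phi$ is not symmetric about $\budget/e$. The statement only claims the maximizer is one of the two, so I would not try to resolve that. Finally, the edge cases are covered by the same argument: if no divisor lies below $\budget/e$, then $d_-$ does not exist and $d_+$ is the maximizer, and symmetrically for the upper side (for instance $d_+=\budget$ when $\budget$ is prime and larger than $\budget/e$ allows).
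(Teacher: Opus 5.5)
Your proof is correct and follows the paper's own route: maximize $p\log(b/p)$, use the sign of the derivative (and strict concavity) to get unimodality with the peak at $b/e$, then take the nearest admissible divisor immediately below or above $b/e$, which is exactly the reading the paper adopts. One correction: your caveat that the absolute-distance reading ``can fail'' does not hold here, because writing $p=b/h$ the count is $(h^{1/h})^{b}$ and the distance is $b\,|1/h-1/e|$, and listing divisors from closest to farthest ($h=3,4,2,5,6,\dots,1$) never increases $h^{1/h}$ (with $h=2$ and $h=4$ tied), so the single nearest divisor is always a maximizer; likewise your ``upper side'' edge case is vacuous, since $p=b$ always lies above $b/e$.
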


It is also possible to bound the number of sets that are guaranteed to represent $\nclusters$ centroids, as formalized in Proposition~\ref{prop:number_of_sets_bound}. 

\begin{proposition}
\label{prop:number_of_sets_bound}
Let $\card_{min}$ be the minimum number of \semicentroidsword in each set. The number $\nsets^*$ of sets of \semicentroidsword that are guaranteed to represent $\nclusters$ centroids satisfies 
\begin{align*}
     \log_{\card_{min}} \nclusters  \leq \nsets^* \leq \lceil \frac{\nclusters}{\card_{min}-1} \rceil. 
\end{align*}
\end{proposition}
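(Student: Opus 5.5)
The plan is to read $\nsets^*$ as the smallest number of sets, each containing at least $\card_{min}\ge 2$ \semicentroidsword, for which the Khatri-Rao operator can produce $\nclusters$ distinct centroids. I will then bound $\nsets^*$ from both sides using the counting fact noted in the discussion of set cardinalities: $\nsets$ sets with $\card_1,\dots,\card_\nsets$ \semicentroidsword represent at most $\prod_{i=1}^{\nsets}\card_i$ centroids. Taking every set at the minimal size $\card_{min}$, the capacity is $\card_{min}^{\nsets}$. So the quantity to control is the least $\nsets$ with $\card_{min}^{\nsets}\ge \nclusters$.

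\emph{Lower bound.} Representing $\nclusters$ centroids requires $\card_{min}^{\nsets^*}\ge \nclusters$, since with minimal-size sets the capacity cannot exceed this product. Taking $\log_{\card_{min}}$, which is monotone because $\card_{min}\ge 2$, gives $\nsets^*\ge \log_{\card_{min}}\nclusters$ directly.

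\emph{Upper bound.} It suffices to exhibit one value $\nsets\le\lceil \nclusters/(\card_{min}-1)\rceil$ with $\card_{min}^{\nsets}\ge\nclusters$, because then the least such value $\nsets^*$ is also at most this. I will use Bernoulli's inequality on $\card_{min}=1+(\card_{min}-1)$:
\begin{equation*}
\card_{min}^{\nsets}=\bigl(1+(\card_{min}-1)\bigr)^{\nsets}\ge 1+\nsets(\card_{min}-1).
\end{equation*}
Choosing $\nsets=\lceil \nclusters/(\card_{min}-1)\rceil$ gives $\nsets(\card_{min}-1)\ge\nclusters$, hence $\card_{min}^{\nsets}\ge 1+\nclusters>\nclusters$. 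This value of $\nsets$ therefore guarantees the capacity, and minimality yields $\nsets^*\le\lceil \nclusters/(\card_{min}-1)\rceil$.

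The main obstacle is interpretive rather than computational. One must fix what ``guaranteed to represent'' means, namely that capacity is measured under the worst case where each set has only $\card_{min}$ elements. One must also note that $\card_{min}\ge 2$ is needed, since $\card_{min}=1$ makes the upper bound undefined and gives no growth; the paper's remark that sets of size two give no advantage is consistent with this assumption. Once this is settled, both inequalities are one-line consequences of monotonicity of the logarithm and Bernoulli's inequality.
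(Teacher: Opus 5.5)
Your lower bound is the paper's argument. The upper bound, however, has a genuine gap, and it comes from how you read ``guaranteed to represent.'' If the phrase only meant that the number of combinations satisfies $h_{min}^{p}\ge k$, then $p^*=\lceil\log_{h_{min}}k\rceil$ exactly, and the linear bound $\lceil k/(h_{min}-1)\rceil$ would be pointless. The paper's proof makes clear what is actually claimed: that many sets can reproduce \emph{any} prescribed collection of $k$ centroids. For that claim, having enough combinations is necessary but not sufficient, because the Khatri-Rao aggregation forces algebraic relations among the centroids. For instance, take $\oplus=+$, $p=2$ and $h_1=h_2=2$. Then one always has $\boldsymbol{\mu}_{1,1}-\boldsymbol{\mu}_{2,1}=\boldsymbol{\theta}^1_1-\boldsymbol{\theta}^2_1=\boldsymbol{\mu}_{1,2}-\boldsymbol{\mu}_{2,2}$. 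So four generic points, which form a parallelogram under no labeling, cannot be represented, even though $2^2=4=k$. Hence your step ``it suffices to exhibit $p$ with $h_{min}^{p}\ge k$'' does not yield the upper bound, and Bernoulli's inequality does not touch the actual difficulty.

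The missing idea is a direct construction with neutral elements. Put into every set one neutral protocentroid ($\mathbf{1}$ for $\oplus=\times$, $\mathbf{0}$ for $\oplus=+$) together with $h_{min}-1$ of the target centroids, using different targets in different sets. If $k$ is not a multiple of $h_{min}-1$, pad the last set arbitrarily. Aggregating a target stored in set $q$ with the neutral protocentroids of all other sets returns that target unchanged. Thus $p$ sets realize $p(h_{min}-1)$ arbitrary centroids, and $p=\lceil k/(h_{min}-1)\rceil$ suffices; the remaining combinations merely add further centroids. This is how the paper proves the upper bound, and it explains why the bound is linear rather than logarithmic in $k$.
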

The proof is given in Appendix~C.

\spara{Choosing the number of centroids.}
If the budget $\budget$ of vectors is not fixed and one wants to identify the number of clusters that are appropriate for the data, \kr clustering can be combined with established techniques such as \ensuremath{X}-\textsc{Means}~\cite{Pelleg2000Xmeans} or \ensuremath{G}-\textsc{Means}~\cite{Hamerly2003Learning}. Here, the number of centroids is successively increased and the current parameterization is evaluated, e.g., by using the Bayesian Information Criterion~\cite{schwarz1978estimating} or by testing if certain distributional conditions are fulfilled. In \kr clustering, increasing the number of clusters is equivalent to either increasing the cardinality of one set of protocentroids or the number of sets of protocentroids. Moreover, for deep clustering,  a \kr variant of such algorithms as \textsc{DipDECK}~\cite{leiber2021dip} or \textsc{DeepDPM}~\cite{Ronen2022deepdpm}, which also optimize the number of underlying clusters, can be designed. 

\spara{Choosing the aggregator function.}
In this work, we focus on the sum and product aggregator functions. Unfortunately, deciding between the sum and product aggregators prior to running our algorithms is difficult, particularly when moving beyond the naïve two-phase procedure described in Section~\ref{sec:naive}, which relies on an initial set of unconstrained centroids. More specifically, given an initial set of unconstrained centroids, in the additive model, centroid differences $\centroid_{i,j} - \centroid_{i',j} = (\protocentroid_i + \protocentroid_j) - (\protocentroid_{i'} + \protocentroid_j)$ remain nearly constant across $j$, and a similar invariance holds for the multiplicative model after taking logarithms. This can provide a simple heuristic for deciding between the two models. 
Without an initial set of centroids, similar heuristics cannot be considered. 
In our empirical evaluation, both the sum and product aggregators achieve competitive performance in \kmeansprob clustering. In contrast, in \kr deep clustering, the sum aggregator is generally preferable, as it results in an easier optimization.

%In our empirical evaluation, in \kmeansprob clustering, the sum and product aggregators provide competitive performance. In \kr deep clustering, the sum aggregator is generally preferable to the product as it results in an easier optimization.

The choice of aggregator function does not need to be limited to sum or product. Conceptually, the \kr clustering paradigm could accommodate arbitrary aggregator functions. However, there exists a tension between expressivity and optimization difficulty. As a consequence, increasing aggregator complexity not only compromises interpretability, but can also hurt performance. 

\spara{Choosing the number of Hadamard factors and their ranks.}
In \kr deep clustering, by default, we re\-parameterize the autoencoder weights using a two-factor Hadamard decomposition, i.e., we set $\nhadamardfactors = 2$ in Equation~\eqref{eq:hadamard_repa}, which ensures stable gradient-based optimization. Using more Hadamard factors can improve compression, but reduces stability. The ranks of both factors are set to be equal (to maximize the rank that can be captured) and large enough so that the compressed autoencoder incurs comparable reconstruction quality to the uncompressed one.

\section{Experiments}\label{sec:experiments}

In this section, we evaluate the performance of Khatri-Rao clustering against baselines considering both synthetic and real benchmark datasets. The experiments primarily aim to show that \kr clustering achieves the goal of reducing the size of a centroid-based data summary without compromising its accuracy. We also showcase the practical benefits of \kr clustering through case studies.

\begin{figure*}[t]
\centering
%    \begin{tabular}{cc|cc}
%    \includegraphics[width=0.23\linewidth]{figures/inertia_exp_blobs_withkm_max.pdf}    & \includegraphics[width=0.23\linewidth]{figures/inertia_exp_classification_withkm_max.pdf}   &
%    \includegraphics[width=0.23\linewidth]{figures/purity_exp_blobs_withkm_max.pdf}    & \includegraphics[width=0.23\linewidth]{figures/purity_exp_classification_withkm_max.pdf} 
%    \\
%    \blobs & \classification & \blobs & \classification \\ 
%    \end{tabular}
\includegraphics[width=1\textwidth]{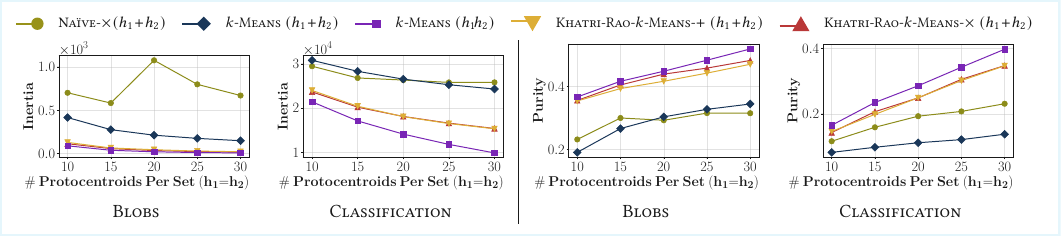}
\caption{Experiments using the \blobs and \classification datasets with $100$ ground truth clusters. Inertia (left) and purity (right) as a function of the cardinality $\card_1 = \card_2$ of two sets of \semicentroidsword. \kmeans$(\card_1 \card_2)$ uses $\card_1 \card_2$ vectors to represent centroids, while all other algorithms use $\card_1 + \card_2$ vectors. }
    \label{fig:purity_and_inertia}
\end{figure*}

\subsection{Settings}\label{sec:experimental_settings}
In the following, we present the experimental setup.

\spara{Datasets.}
We consider two synthetic dataset generators (\blobs and \classification) from the \textsc{scikit-learn} Python library~\cite{scikit-learn}, which allow exploring the behavior of different algorithms as a function of the number of data points, features and underlying clusters. The preliminary experiments and the scalability analysis focus on those datasets. 

In addition, we consider seven real-world benchmark datasets, available through the \textsc{ClustPy}~\cite{leiber2023benchmarking} Python library and two synthetic datasets available through the \textsc{clustbench}~\cite{gagolewski2022framework}  Python library. Finally, we consider the \double dataset, which is created by concatenating pairs of digits from \mnist~\cite{lecun1998gradient}, yielding $100$ clusters representing the numbers $0$ to $99$.

The salient characteristics of the datasets considered in the experimental evaluation are provided in Table~\ref{tab:datasets}. A detailed description of each dataset is provided in Appendix~A.  

\begin{table}[t]
\centering    
\caption{Characteristics of the datasets used in the experiments. For each dataset, we report the number of data points, the number of features, the number of ground-truth clusters (\# Labels) and the imbalance ratio (IR), defined as the ratio of the smallest to the largest cluster size.}
\footnotesize
\begin{tabular}{lcccc}
\toprule
Dataset & \# Data points & \# Features & \# Labels & IR \\
\midrule
\mnist & 25000 & 784 & 10 & 1.00 \\
\double & 10000 & 1568 & 100 & 1.00 \\
\har & 10299 & 561 & 6 & 0.72 \\
\olivetti & 400 & 4096 & 40 & 1.00 \\
\cmu & 624 & 960 & 20 & 0.88 \\
\symbols & 1020 & 398 & 6 & 0.90 \\
\stickfigures & 900 & 400 & 9 & 1.00 \\
\optdigits & 5620 & 64 & 10 & 0.97 \\
\classification & 5000 & 10 & 100 & 0.91 \\
\chameleon & 10000 & 2 & 10 & 0.10 \\
\soybean & 562 & 35 & 15 & 0.22 \\
\blobs & 5000 & 2 & 100 & 1.00 \\
\rfifteen & 600 & 2 & 15 & 1.00 \\
\bottomrule
\end{tabular}
\label{tab:datasets}
\end{table}

\spara{Parameter settings.}
For the purposes of our evaluation, we assume that the underlying ground-truth number of clusters in a dataset is known, and is given as input to all algorithms. Unless stated otherwise, for all experiments regarding Khatri-Rao clustering, we consider two sets of \semicentroidsword (i.e., $\nsets=2$). 
To maximize the number of centroids that can be represented, the cardinality of each set of \semicentroidsword is determined by selecting the two factors of the total number of clusters that are closest in value so that $\card_1\card_2=\nclusters$ (e.g., $\card_1=8$ and $\card_2=5$ for $\nclusters=40$). In all experiments, we run each method $20$ times with random initialization and select the solution giving the smallest inertia, i.e., the smallest squared Euclidean distance between samples and their assigned cluster centroids. 

\hadkmeans terminates when either $n_{iter}=200$ iterations are reached, or the centroid movement falls below $\epsilon=10^{-4}$.
We consider \hadkmeans with sum and product aggregator functions. Instead, in the deep clustering experiments we focus on the sum since it generally yields superior results, as explained in Section~\ref{sec:designchoices}.  

In all deep clustering experiments, the parameter $w_{rec}$, which balances the clustering and reconstruction losses as per Equation~\eqref{eq:deepclusteringObjective}, is set to $1$, the batch size for batch-wise gradient-based optimization is set to $512$, and we use the ADAM optimizer~\cite{Diederik2015adam} using a learning rate of $10^{-3}$ for pre-training the autoencoder and $10^{-4}$ for clustering.

Further, a preliminary step in the deep clustering experiments is required to learn two autoencoders, a fully-connected autoencoder for standard deep clustering algorithms and a compressed autoencoder for their Khatri-Rao variants. Autoencoders have an encoder with $5$ layers of dimensions $\nfeats{-}1024{-}512{-}256{-}10$ and the decoder is a mirrored version of the encoder. To ensure reliable performance of the Khatri-Rao deep clustering algorithms, it is crucial that the initial compressed autoencoder achieves a similar reconstruction quality as the uncompressed one. Given a matrix of weights $\autoencoderweightmatrixl \in \mathbb{R}^{d_l \times \nfeats_l}$, we initially set the rank of each of two Hadamard-decomposition factors to $\max\{ 10, \min \{  \nrows_l, \nfeats_l \} \}$, and we leave the input and output layers uncompressed, which  improves performance. If the loss of the initial compressed autoencoder is higher than that of the full autoencoder, we iteratively multiply the rank by $2,3, \dots$ until the loss of the compressed autoencoder falls under that of the full autoencoder. As shown in Section~\ref{sec:experimental_results}, this strategy guarantees remarkable parameter reductions. The number of epochs used for pre-training the standard autoencoders and for the actual clustering procedure is set to $150$. For pre-training the compressed autoencoder, we set the number of epochs to $1000$ and $500$ additional epochs are added whenever the rank is increased as explained above. 

\spara{Metrics.}
In our experiments, the accuracy of a data summary is measured by the quality of clustering results.
We consider the following standard metrics to evaluate clustering results that take advantage of ground truth labels: adjusted Rand index (ARI)~\cite{hubert1985comparing}, unsupervised clustering accuracy (ACC)~\cite{Yang2010accuracy} and normalized mutual information (NMI)~\cite{kvalseth1987entropy}. These metrics have a maximum value of $1$, which reveals a perfect match between predicted and ground-truth labels. In the case of \kmeansprob clustering, we also monitor inertia, which is the objective function optimized for by standard \kmeans and \hadkmeans. In addition, in the preliminary experiments, we monitor clustering purity, which measures the proportion of correctly assigned data points after assigning each data point to the majority ground-truth label of its cluster~\cite{manning2008introduction}.

To quantify the amount of compression, we monitor the number of parameters that a given algorithm uses to obtain a succinct summary of a dataset. 

\spara{Clustering algorithms.}
In the case of \kmeansprob clustering, we compare  \hadkmeans against the standard \kmeans algorithm (implementation from \textsc{scikit-learn}~\cite{scikit-learn}). Additionally, we consider the two-phase naïve approach described in Section~\ref{sec:naive} with product aggregator. When not clear from context, we specify the aggregator function and the budget of vectors used to represent centroids (e.g., \hadkmeanssum$(\card_1 + \card_2)$ denotes \hadkmeans with sum aggregator and $\card_1 + \card_2$ \semicentroidsword).

In the case of deep clustering, we consider \dkm and \idec (implementations from \textsc{ClustPy}~\cite{leiber2023benchmarking}) against their Khatri-Rao variants, \krdkm ~and \kridec. Appendix~B provides implementation details for all algorithms.

\spara{Computing environment.}
Experiments use a machine with a 13th Gen Intel Core i9-13900H processor (14 cores, 20 threads), an NVIDIA GeForce RTX 4060 GPU and 32 GB of RAM.

\begin{figure}[t]
  %  \hspace*{-0.35cm}\input{figures/newlegendby_p.tikz}
  %  \centering
  %  \begin{tabular}{cc}
   %  \includegraphics[width=0.46\linewidth]{figures/inertia_by_p_with_km_max3.pdf}    & \includegraphics[width=0.46\linewidth]{figures/inertia_by_p_with_km_max3_classification.pdf}   \\
   % \blobs   & \classification \\
   % \end{tabular}
   \includegraphics[width=0.46\textwidth]{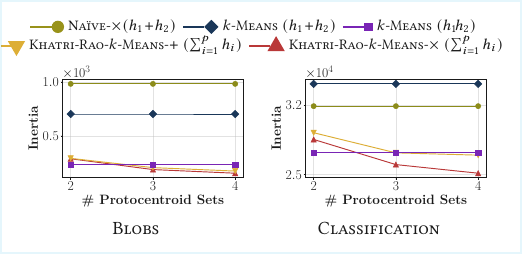} 
    \caption{Experiments using the \blobs and \classification datasets with $100$ ground truth clusters. Inertia as a function of the number of sets of \semicentroidsword. Here, \hadkmeans always uses $12$ total vectors to represent centroids. All baselines assume $\card_1 = \card_2 = 6$.}
    \label{fig:impact_of_p}
\end{figure}

\subsection{Quantitative Results}\label{sec:experimental_results}
In this section, we present the results of the experiments, first for the \kmeans setting and second for the deep clustering setting.

\begin{table*}[t]
\centering
\caption{\label{tab:kmeansclusteringresults} Experiments on synthetic and real-world datasets comparing \hadkmeans using the product (\hadkmeansprod) and sum (\hadkmeanssum) aggregator functions with two sets of $\card_1$ and $\card_2$ \semicentroidsword against \kmeans ($\card_1 + \card_2$) and \kmeans ($\card_1  \card_2$). For each dataset and algorithm, we report unsupervised clustering accuracy (ACC), adjusted Rand index (ARI), normalized mutual information (NMI) and inertia (normalized by dividing by the inertia of \kmeans ($\card_1  \card_2$)). The last column reports the ratio of the number of parameters used compared to \kmeans ($\card_1  \card_2$). 
}
\footnotesize
\resizebox{1\textwidth}{!}{
\begin{tabular}{l|cccccccccccccccccc|c}
\toprule
Dataset & \multicolumn{4}{c}{\hadkmeanssum($\card_1 + \card_2$)} & \multicolumn{4}{c}{\hadkmeansprod($\card_1 + \card_2$)} & \multicolumn{4}{c}{\kmeans ($\card_1 + \card_2$)} & \multicolumn{4}{c}{\kmeans ($\card_1  \card_2$)} & Params \\
\cmidrule(lr){2-5} \cmidrule(lr){6-9} \cmidrule(lr){10-13} \cmidrule(lr){14-17}
& ARI & ACC & NMI & Inertia & ARI & ACC & NMI & Inertia & ARI & ACC & NMI & Inertia & ARI & ACC & NMI & Inertia & \\
\midrule
\mnist & $0.298$ & $0.436$ & $0.428$ & $1.05$ & $0.309$ & $0.435$ & $0.441$ & $1.07$ & $0.357$ & $0.515$ & $0.467$ & $1.05$ & $0.366$ & $0.515$ & $0.498$ & $1.00$ & $0.70$ \\
\double & $0.109$ & $0.211$ & $0.467$ & $1.16$ & $0.068$ & $0.119$ & $0.396$ & $1.07$ & $0.049$ & $0.081$ & $0.329$ & $1.20$ & $0.075$ & $0.143$ & $0.428$ & $1.00$ & $0.20$ \\
\har & $0.320$ & $0.515$ & $0.486$ & $1.06$ & $0.285$ & $0.456$ & $0.451$ & $1.04$ & $0.291$ & $0.442$ & $0.461$ & $1.03$ & $0.420$ & $0.539$ & $0.559$ & $1.00$ & $0.83$ \\
\olivetti & $0.248$ & $0.398$ & $0.664$ & $1.34$ & $0.239$ & $0.393$ & $0.643$ & $1.41$ & $0.181$ & $0.287$ & $0.565$ & $1.46$ & $0.456$ & $0.600$ & $0.790$ & $1.00$ & $0.33$ \\
\cmu & $0.414$ & $0.503$ & $0.760$ & $1.52$ & $0.463$ & $0.561$ & $0.788$ & $1.52$ & $0.408$ & $0.452$ & $0.767$ & $1.46$ & $0.754$ & $0.772$ & $0.902$ & $1.00$ & $0.45$ \\
\symbols & $0.459$ & $0.605$ & $0.682$ & $1.52$ & $0.693$ & $0.768$ & $0.815$ & $1.35$ & $0.682$ & $0.758$ & $0.810$ & $1.22$ & $0.666$ & $0.689$ & $0.794$ & $1.00$ & $0.83$ \\
\stickfigures & $1.000$ & $1.000$ & $1.000$ & $1.00$ & $0.885$ & $0.889$ & $0.964$ & $4.79$ & $0.708$ & $0.667$ & $0.882$ & $11.57$ & $1.000$ & $1.000$ & $1.000$ & $1.00$ & $0.67$ \\
\optdigits & $0.477$ & $0.625$ & $0.622$ & $1.12$ & $0.457$ & $0.596$ & $0.589$ & $1.07$ & $0.465$ & $0.612$ & $0.622$ & $1.11$ & $0.491$ & $0.640$ & $0.648$ & $1.00$ & $0.70$ \\
\classification & $0.041$ & $0.134$ & $0.362$ & $1.12$ & $0.044$ & $0.137$ & $0.368$ & $1.10$ & $0.035$ & $0.083$ & $0.245$ & $1.44$ & $0.052$ & $0.159$ & $0.387$ & $1.00$ & $0.20$ \\
\chameleon & $0.318$ & $0.435$ & $0.551$ & $1.07$ & $0.307$ & $0.419$ & $0.545$ & $1.06$ & $0.373$ & $0.452$ & $0.583$ & $1.52$ & $0.296$ & $0.439$ & $0.538$ & $1.00$ & $0.70$ \\
\soybean & $0.331$ & $0.484$ & $0.647$ & $1.49$ & $0.368$ & $0.534$ & $0.654$ & $1.29$ & $0.329$ & $0.482$ & $0.637$ & $1.43$ & $0.406$ & $0.589$ & $0.674$ & $1.00$ & $0.53$ \\
\blobs & $0.236$ & $0.326$ & $0.655$ & $1.45$ & $0.242$ & $0.341$ & $0.656$ & $1.34$ & $0.207$ & $0.190$ & $0.655$ & $4.61$ & $0.238$ & $0.350$ & $0.658$ & $1.00$ & $0.20$ \\
\rfifteen & $0.787$ & $0.815$ & $0.910$ & $3.44$ & $0.919$ & $0.928$ & $0.970$ & $1.68$ & $0.264$ & $0.533$ & $0.743$ & $11.77$ & $0.993$ & $0.997$ & $0.994$ & $1.00$ & $0.53$ \\
\bottomrule
\end{tabular}
}
\end{table*}

\spara{Preliminary results.}
We begin by presenting preliminary experiments on the \blobs and \classification datasets. Figure~\ref{fig:purity_and_inertia} shows inertia and purity as a function of the cardinality ($\card_1 = \card_2$) of both sets of \semicentroidsword.  The inertia incurred by \hadkmeans is at most $31\%$ and $81\%$ of that incurred by any of the baselines using the same amount of parameters in the \blobs and \classification datasets, respectively. Similarly, the purity achieved by any such baselines is at most $76\%$ and $81\%$ of that achieved by \hadkmeans. Standard \kmeans can attain lower inertia and higher purity than \hadkmeans, but using $\card_1 \card_2$ vectors to represent centroids rather than $\card_1 + \card_2$.

The performance of \hadkmeans could be improved further by increasing the number \nsets of sets of \semicentroidsword. 
While we leave a thorough investigation to future work, Figure~\ref{fig:impact_of_p} compares the inertia incurred by \hadkmeans as a total of $12$ vectors that are split between $2$, $3$ and $4$ sets of \semicentroidsword against that incurred by baselines with $\card_1 = \card_2 = 6$. The results indicate that \hadkmeans with $12$ vectors to represent centroids can incur lower inertia than standard \kmeans with $\card_1 \card_2 = 36$ vectors as $\nsets$ increases. As discussed in Section~\ref{sec:designchoices}, increasing \nsets can improve performance but makes optimization more challenging. As a consequence, inertia decreases monotonically, but with diminishing reductions. In the remaining experiments, we focus on two sets of \semicentroidsword. Furthermore, the naïve approach is not competitive with \hadkmeans. Thus, in the remaining experiments, we focus on  comparing \hadkmeans against \kmeans.

\spara{Results for \kmeansprob.}
Table~\ref{tab:kmeansclusteringresults} summarizes the results of the experiments comparing \hadkmeans with two sets of $\card_1$ and $\card_2$ \semicentroidsword for sum and product aggregators against standard \kmeans with $\card_1 + \card_2$ and $\card_1 \card_2$ centroids. The table reports the adopted measures of clustering quality as well as the ratio of the inertia and the amount of parameters used for summarization relative to \kmeans with  $\card_1 \card_2$ centroids. The results indicate that \hadkmeans  often but not always outperforms standard \kmeans with the same number of parameters. 

The medians (means) of the inertia ratios reported in the table across datasets are $1.16$ ($1.41$), $1.29$ ($1.52$) and $1.44$ ($3.14$), for \hadkmeanssum, \hadkmeansprod and \kmeans respectively, with all algorithms using $\card_1 +\card_2$ vectors to represent centroids. Nonetheless, standard \kmeans with $\card_1 \card_2$ centroids generally provides superior performance to \hadkmeans, suggesting that \hadkmeans struggles to achieve its ideal performance also due to its lack of flexibility discussed in Section~\ref{sec:algorithms_shallow}.

\spara{Results for deep clustering.}
The experimental results comparing the deep clustering approaches \dkm against \krdkm and \idec against \kridec are shown in Table~\ref{tab:deepclusteringresults}. \dkm and \idec directly return $\card_1 \card_2$ clusters, while \krdkm and \kridec are working with two sets of $\card_1$ and $\card_2$ \semicentroidsword leading to the same number of clusters. The results demonstrate that the Khatri-Rao deep clustering framework can reduce the number of parameters by at least more than $10\%$ and as much as $85\%$ with a minor impact on performance. In some datasets, the performance of \kr deep clustering is even superior to that of standard deep clustering. It is conceivable that \kr deep clustering provides an implicit form of regularization. 

In summary, while \hadkmeans with $\card_1 +\card_2$ \semicentroidsword may not attain the same clustering quality as standard \kmeans with $\card_1 \card_2$ centroids, this is not the case for deep clustering. The performance of deep Khatri-Rao clustering algorithms is close to their optimistic bound.

In our experiments, only the \stickfigures and \double datasets are known to exhibit a \kr structure. However, \kr clustering can deliver advantages over traditional clustering, irrespective of whether its underlying model holds.

\begin{table*}[t]
\centering
\caption{\label{tab:deepclusteringresults}Experiments on synthetic and real-world datasets comparing the deep-clustering algorithms \dkm and \idec  to their Khatri-Rao counterparts using the sum aggregator function. For each dataset and algorithm, we report unsupervised clustering accuracy (ACC), adjusted Rand index (ARI) and normalized mutual information (NMI). The last column reports the ratio of the number of parameters used by \krdkm and \kridec compared to \dkm and \idec.}
\footnotesize
\begin{tabular}{lcccccc|cccccc|c}
\toprule
    Dataset & \multicolumn{3}{c}{\idec} & \multicolumn{3}{c}{\kridec} & \multicolumn{3}{c}{\dkm} & \multicolumn{3}{c}{\krdkm} & $\text{Params Ratio}$ \\
\cmidrule(lr){2-4} \cmidrule(lr){5-7} \cmidrule(lr){8-10} \cmidrule(lr){11-13}
& ARI & ACC & NMI & ARI & ACC & NMI & ARI & ACC & NMI & ARI & ACC & NMI & \\
\midrule
\mnist & $0.616$ & $0.771$ & $0.682$ & $0.630$ & $0.746$ & $0.712$ & $0.574$ & $0.718$ & $0.693$ & $0.645$ & $0.788$ & $0.719$ & $0.74$ \\
\double & $0.174$ & $0.285$ & $0.532$ & $0.180$ & $0.292$ & $0.536$ & $0.178$ & $0.295$ & $0.537$ & $0.201$ & $0.330$ & $0.553$ & $0.79$ \\
\har & $0.581$ & $0.692$ & $0.660$ & $0.621$ & $0.703$ & $0.701$ & $0.539$ & $0.649$ & $0.634$ & $0.619$ & $0.676$ & $0.714$ & $0.62$ \\
\olivetti & $0.399$ & $0.517$ & $0.746$ & $0.371$ & $0.510$ & $0.734$ & $0.437$ & $0.570$ & $0.769$ & $0.387$ & $0.542$ & $0.736$ & $0.88$ \\
\cmu & $0.564$ & $0.646$ & $0.782$ & $0.648$ & $0.707$ & $0.823$ & $0.690$ & $0.761$ & $0.852$ & $0.525$ & $0.631$ & $0.743$ & $0.66$ \\
\symbols & $0.655$ & $0.684$ & $0.779$ & $0.689$ & $0.722$ & $0.798$ & $0.662$ & $0.693$ & $0.786$ & $0.711$ & $0.746$ & $0.813$ & $0.47$ \\
\stickfigures & $1$ & $1$ & $1$ & $1$ & $1$ & $1$ & $1$ & $1$ & $1$ & $1$ & $1$ & $1$ & $0.47$ \\
\optdigits & $0.688$ & $0.809$ & $0.751$ & $0.645$ & $0.775$ & $0.721$ & $0.634$ & $0.733$ & $0.742$ & $0.681$ & $0.774$ & $0.766$ & $0.22$ \\
\classification & $0.032$ & $0.124$ & $0.336$ & $0.007$ & $0.070$ & $0.245$ & $0.044$ & $0.144$ & $0.369$ & $0.040$ & $0.133$ & $0.356$ & $0.16$ \\
\chameleon & $0.321$ & $0.466$ & $0.564$ & $0.312$ & $0.415$ & $0.550$ & $0.329$ & $0.474$ & $0.589$ & $0.319$ & $0.476$ & $0.540$ & $0.15$ \\
\soybean & $0.391$ & $0.562$ & $0.657$ & $0.451$ & $0.617$ & $0.696$ & $0.373$ & $0.546$ & $0.661$ & $0.445$ & $0.635$ & $0.693$ & $0.19$ \\
\blobs & $0.182$ & $0.214$ & $0.597$ & $0.188$ & $0.239$ & $0.609$ & $0.226$ & $0.314$ & $0.653$ & $0.215$ & $0.307$ & $0.650$ & $0.15$ \\
\rfifteen & $0.986$ & $0.993$ & $0.988$ & $0.989$ & $0.995$ & $0.991$ & $0.830$ & $0.850$ & $0.926$ & $0.993$ & $0.997$ & $0.994$ & $0.15$ \\
\bottomrule
\end{tabular}
\end{table*}

\subsection{Scalability}\label{sec:scalibility}

\begin{figure*}[t]
\centering
%\begin{tabular}{ccc}
%   \includegraphics[width=0.28\linewidth]{figures/runtime_by_data_points_with_km_max50_blobs.pdf}\hspace{0.3cm}  & 
%   \includegraphics[width=0.28\linewidth]{figures/runtime_by_features_with_km_max50_blobs.pdf} & 
%   \includegraphics[width=0.28\linewidth]{figures/runtime_by_centroids_with_km_max50_blobs.pdf} 
%   \\
%   \includegraphics[width=0.28\linewidth]{figures/memory_by_data_points_with_km_max50_blobs.pdf}  & 
%   \includegraphics[width=0.28\linewidth]{figures/memory_by_features_with_km_max50_blobs.pdf} &
 %  \includegraphics[width=0.28\linewidth]{figures/memory_by_centroids_with_km_max50_blobs.pdf}
%\end{tabular}
\includegraphics[width=1.03\textwidth]{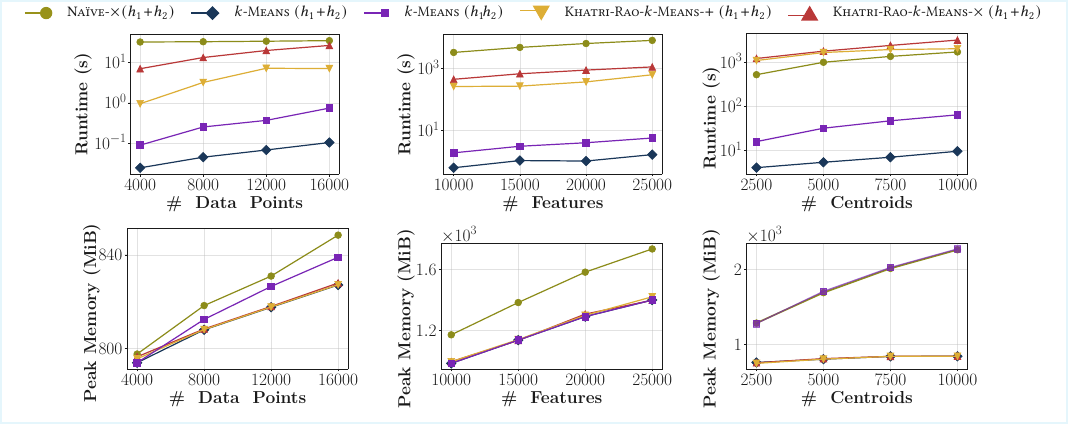}
\caption{Experiments using the \blobs dataset. Runtime in seconds (top) and peak memory usage in Mebibytes (bottom) by number of data points $\ndatapoints$, number of features $\nfeats$ and number of centroids $\nclusters$. Here, \kmeans$(\card_1 \card_2)$ uses $\card_1 \card_2$ vectors to represent centroids, while all other algorithms use $\card_1 + \card_2$ vectors. The $y$-axis is on a logarithmic scale.}
\label{fig:scalibility}
\end{figure*}

Our quantitative experiments suggest that \kr clustering can yield more succinct clustering-based summaries than standard clustering methods while maintaining a comparable accuracy. However, it is also important to ensure that the provided advantages are not undermined by significantly worse scalability. According to the discussion in Section~\ref{sec:algorithms_shallow}, \hadkmeans has the same asymptotic time complexity as \kmeans. In addition, \hadkmeans can reduce the memory usage of \kmeans when the number of clusters and hence centroids to be represented grows. To complement the analysis of time and space complexity, we conduct an empirical scalability analysis. 

The results of the scalability analysis are summarized in Figure~\ref{fig:scalibility}. This figure shows runtime (in seconds) and peak memory usage (in Mebibytes) for a single execution of an algorithm when increasing the number of data points, features and centroids using the \blobs dataset. In particular, we vary the number of data points in  $\{4000, 8000, 12000, 16000\}$ (with $100$ clusters and $100$ features), features in $\{10000 , 15000, 20000, 25000\}$ (with $1000$ data points and $100$ clusters), and clusters in $\{2500 , 5000, 7500, 10000\}$ (with $20000$ data points and $100$ features). 

The results suggest that \hadkmeans clustering introduces a runtime overhead compared to standard \kmeans. However, the overhead remains nearly constant as the number of data points, features and centroids increases, in line with the time-complexity discussion in Section~\ref{sec:algorithms_shallow}. The two-phase naïve approach to \ourkmeans is usually the slowest algorithm, except when the number of centroids becomes large, in which case \hadkmeans can become slower.

Considering memory requirements, \hadkmeans often exhibits a similar behavior as standard \kmeans using $\card_1 + \card_2$ vectors to represent centroids. Standard \kmeans with $\card_1 \card_2$ centroids, on the other hand, can be more memory demanding, particularly when the number of centroids grows, using up to $2.7$ times more memory than \hadkmeans. 
%\kmeans with $\card_1 \card_2$ centroids uses up to $2.7$ times more memory as \hadkmeans. 
Furthermore, the gap grows with the number of centroids, in agreement with the space-complexity discussion in Section~\ref{sec:algorithms_shallow}.

\subsection{Case Studies}

\begin{figure*}[t!]
    \centering
    \includegraphics[width=1\textwidth]{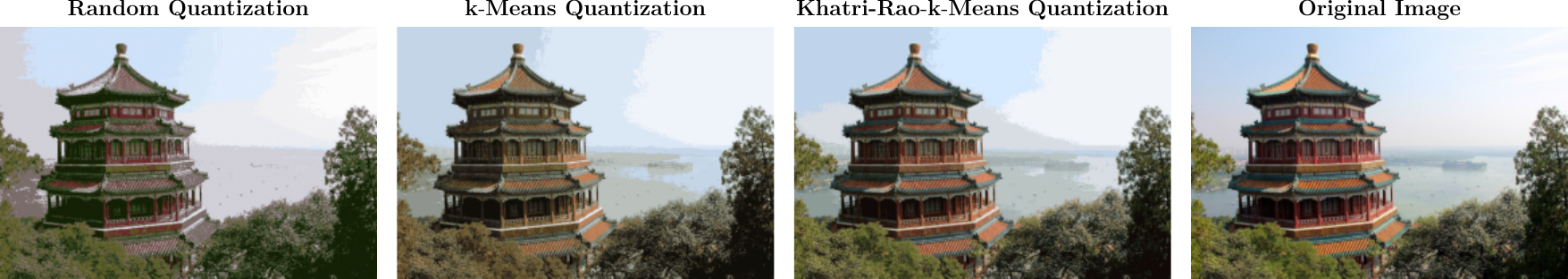}
    \caption{Comparison of \hadkmeans with product aggregator using two sets of $\card_1 = \card_2 = 6$ \semicentroidsword with different approaches to color quantization using the same number of parameters.}
\label{fig:combined_images}
\end{figure*}

\begin{figure}[t!]
\centering
 \includegraphics[width=0.45\textwidth]{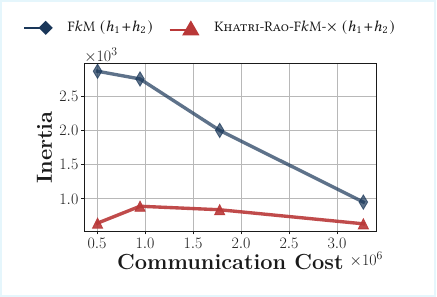}
\caption{\FEMNIST  dataset. Inertia against communication costs from the server to the clients (in Bytes) in a simulated federated learning environment for \fedkmeans and \fedhadkmeans using the product aggregator.}
\label{fig:federated}
\end{figure}

We conclude our experimental evaluation by presenting two simple case studies demonstrating the benefits of our \kr clustering paradigm when used for color quantization and in a federated-learning environment. The focus of the case studies is not to claim state-of-the-art performance on the two tasks, but to highlight the practical advantages that our paradigm can offer in real-world applications. Thus, for the purposes of the case studies,  we restrict our attention to \hadkmeans using the product aggregator.

\spara{Obtaining more succinct codebooks for color quantization.}
Color quantization in computer graphics enables efficient compression of certain types of images by reducing the number of colors. It allows displaying images with many colors on hardware-constrained devices that can only display a limited number of colors, usually due to memory limitations~\cite{brun2017color}. A common approach casts color quantization as clustering in a three-dimensional RGB space, where pixels are points. The centroids obtained by \kmeans form a codebook of representative colors, and each pixel is mapped to its closest centroid for color quantization.

\hadkmeans has the potential to extract more succinct and yet equally accurate codebooks compared to standard \kmeans. To investigate this potential, we consider the image from the \textsc{scikit-learn} `Color Quantization using K-Means' example\footnote{\url{https://scikit-learn.org/0.19/auto_examples/cluster/plot_color_quantization.html}}. We run \kmeans and \hadkmeans with the same number of centroid parameters on a subset of the image ($1000$ pixels), and compare the obtained color quantizations. In particular, \kmeans uses $12$ centroids, and \hadkmeans two sets of $6$ \semicentroidsword. For reference, we also include a color-quantization strategy which picks $12$ pixels uniformly at random to form the codebook. 

The results are given in Figure~\ref{fig:combined_images}, and verify that \hadkmeans can outperform standard \kmeans in the task of color quantization as it better preserves the colors of the original image. This is particularly evident in the improved representation of red tones. This visual assessment is confirmed numerically as the values of the inertia incurred by random quantization, \kmeans and \hadkmeans are $4686$, $2009$ and $1144$, respectively.

\spara{Reducing communication costs for clustering in federated learning environments.}
In recent years, federated learning, which allows for collaborative training of machine learning models across distributed entities, has emerged as a central area of research~\cite{wen2023survey}. The Khatri-Rao clustering paradigm can benefit federated learning by reducing communication costs between entities, an important goal in federated learning research~\cite{paragliola2022definition}.

Recently,~\citet{garst2024federated} have introduced an effective implementation of \kmeans in a federated setting (henceforth called \fedkmeans). As \fedkmeans requires communicating centroids between several clients and a server, \fedhadkmeans can potentially reduce the costs of communication by considering a lightweight set of \semicentroidsword instead of regular centroids. To extend \fedkmeans to \kr clustering, it suffices to replace each invocation of \kmeans in \fedkmeans with \hadkmeans.  The resulting algorithm is referred to as \fedhadkmeans. To demonstrate the advantages given by \fedhadkmeans with respect to communication costs, we simulate a federated learning environment with $10$ clients and one server. In this experiment, we consider the benchmark \FEMNIST dataset~\cite{caldas2018leaf}, and we  measure the inertia achieved by \fedkmeans and \fedhadkmeans as a function of the communication costs from the server to the clients. The results of this experiment are displayed in Figure~\ref{fig:federated} and indicate that \fedhadkmeans consistently reduces inertia relative to \fedkmeans at parity communication cost. For the smallest communication cost, the inertia incurred by \fedkmeans is about five times larger than that incurred by \fedhadkmeans. This suggests that \fedhadkmeans can communicate clustering results of a given quality much more efficiently than \fedkmeans.

\section{Conclusion}\label{sec:conclusions}

We have introduced the Khatri-Rao clustering paradigm which extends centroid-based clustering to find more succinct data summaries without significantly compromising their accuracy.

First, we have applied the paradigm in the context of the popular \kmeansprob clustering by introducing the \hadkmeans algorithm. Extensive experiments and case studies show that \hadkmeans can find more succinct but equally accurate data summaries than the long-standing \kmeans algorithm. 

We have also introduced the Khatri-Rao deep clustering framework which hinges on \hadkmeans but fruitfully leverages representation learning to address its lack of flexibility. 
Khatri-Rao deep clustering often performs on par with standard deep clustering, while using considerably fewer parameters.

There are many possibilities for future work. For example, it would be interesting to study potential extensions of the Khatri-Rao clustering paradigm, e.g., by considering a wider range of baseline clustering approaches. However, the most pressing challenge is to understand the underlying structure of the clusters before applying Khatri–Rao clustering algorithms. In particular, how can data exhibiting additive or multiplicative Khatri–Rao structures be effectively characterized?  

%%
%% The acknowledgments section is defined using the "acks" environment
%% (and NOT an unnumbered section). This ensures the proper
%% identification of the section in the article metadata, and the
%% consistent spelling of the heading.
\begin{acks}
Aristides Gionis is supported by the ERC Advanced Grant REBOUND (834862), the Swedish Research Council project ExCLUS (2024-05603), and the Wallenberg AI, Autonomous Systems and
Software Program (WASP) funded by the Knut and Alice Wallenberg Foundation.
Collin Leiber is supported by the Research Council of Finland (decision 368654) and Heikki Mannila by the Technology Industries of Finland Centennial Foundation.
\end{acks}

%%
%% The acknowledgments section is defined using the "acks" environment
%% (and NOT an unnumbered section). This ensures the proper
%% identification of the section in the article metadata, and the
%% consistent spelling of the heading.

\section*{Artifacts}
Our Python implementations of Khatri-Rao \kmeans, \krdkm and \kridec are available in an online repository~\footnote{\url{https://github.com/maciap/KhatriRaoClustering}}, along with code to load datasets and reproduce the experiments. The repository also contains the Appendix~\footnote{\url{https://github.com/maciap/KhatriRaoClustering/blob/main/Appendix.pdf}}.

%%
%% The next two lines define the bibliography style to be used, and
%% the bibliography file.
\bibliographystyle{ACM-Reference-Format}
\bibliography{ref_article}

\clearpage
%%
%% If your work has an appendix, this is the place to put it.
\appendix

\section*{APPENDIX}

\iffalse 
\begin{itemize}
    \item \textbf{Appendix \ref{app:dataset_details}:}  \nameref{app:dataset_details} 
     \dotfill Page \pageref{app:dataset_details}

    \smallskip
    
    \item \textbf{Appendix \ref{app:implementation_details}:} \nameref{app:implementation_details}
    \dotfill Page \pageref{app:implementation_details}

    \smallskip

    \item \textbf{Appendix \ref{app:proofs}:} \nameref{app:proofs}
    \dotfill Page \pageref{app:proofs}
\end{itemize}
\fi 

\section{Dataset Details}\label{app:dataset_details}
As stated in Section~\ref{sec:experimental_settings}, in our experiments, we consider $13$ synthetic and real-world datasets commonly used as benchmarks for clustering tasks. The datasets we consider reflect diverse data distributions, clustering structures and application domains. 
Next, we describe each dataset in detail.

\begin{itemize}
    \item The \mnist dataset~\cite{lecun1998gradient}, available via \textsc{PyTorch}~\cite{paszke2017automatic} or \textsc{ClustPy}~\cite{leiber2023benchmarking}, is a collection of $28 \times 28$ (vectorized) images of handwritten grayscale digits. We draw a stratified subsample of 
    $25000$ images, ensuring that the original class proportions across the ten digit clusters are preserved. We rescale all pixel values by dividing by the maximum.

    \item  The \double dataset is derived from \mnist by horizontally concatenating pairs of digit images. Each sample is a $28 \times 56$ (vectorized) grayscale image obtained by placing one $28 \times 28$ digit in the left position and another in the right position, and the label encodes the ordered pair of digits, yielding $100$ clusters in total. For our experiments, we generate $10000$ such composite images using the procedure described above, resulting in an approximately uniform distribution over all digit pairs. Unlike the \mnist dataset, the \double dataset by construction admits a natural clustering with a \kr structure. We rescale all pixel values by dividing by the maximum.

    \item The \har dataset, available via the \textsc{UCI repository}~\cite{kelly2023uci} or \textsc{ClustPy}~\cite{leiber2023benchmarking}, consists of sensor readings collected from smartphone accelerometers and gyroscopes during human activity monitoring. There are $10299$ data points. Each data point is represented as a multivariate feature vector of dimension $561$ derived from raw time-series measurements, and the dataset includes $6$ activity clusters such as walking, standing and sitting.  We standardize each feature by subtracting its mean and dividing by its standard deviation.

    \item The \olivetti dataset, available via \textsc{ClustPy}~\cite{leiber2023benchmarking} or \textsc{scikit-learn}~\cite{scikit-learn} 
contains (vectorized) grayscale facial images of $40$ individuals, with $10$ images per subject captured under varying lighting conditions, facial expressions, and poses. Each image has resolution $64 \times 64$ pixels. We standardize each feature by subtracting its mean and dividing by its standard deviation.

    \item The \cmu dataset, available via the \textsc{UCI repository}~\cite{kelly2023uci} or \textsc{ClustPy}~\cite{leiber2023benchmarking}, contains $624$ grayscale facial images of $20$ persons varying their pose (up, straight, left and right) and expression (neutral, happy, sad, angry), and shown with and without sunglasses. The original images with a resolution of $30 \times 32$ pixels are vectorized and each feature is standardized by subtracting its mean and dividing by its standard deviation.

    \item  The \symbols dataset, available via \textsc{ClustPy}\footnote{The dataset is also available at: \url{https://www.timeseriesclassification.com/index.php}}~\cite{leiber2023benchmarking}, consists of vectorized handwritten symbols. Each sample corresponds to a time series obtained from the drawing trajectory of a symbol. The symbols are drawn by $13$ different individuals. The dataset contains a total of $1020$ data points, each with $398$ measurements. The data points are organized into $6$ natural clusters. We standardize each feature by subtracting its mean and dividing by its standard deviation.

    \item The \stickfigures dataset~\cite{gunnemann2014smvc}, available via \textsc{ClustPy}~\cite{leiber2023benchmarking},  consists of $900$ synthetic (vectorized) silhouette images of human stick figures generated under varying poses. Each image has a resolution of $20 \times 20$ pixels.  Examples of such images are given in Figure~\ref{fig:stickfigures_example}. Each image is provided at a fixed resolution and captures a simplified body configuration defined by joint positions and limb orientations, resulting in $9$ distinct pose-based clusters. We rescale all pixel values by dividing by the maximum.

    \item The \optdigits dataset, available via the \textsc{UCI repository}~\cite{kelly2023uci} or \textsc{ClustPy}~\cite{leiber2023benchmarking}, contains $5620$ (vectorized) handwritten digit images represented as $8 \times 8$ grayscale pixel grids. We standardize each feature by subtracting its mean and dividing by its standard deviation.

    \item The \classification dataset, sourced by \textsc{scikit-learn}~\cite{scikit-learn}, is a synthetic dataset. While it was originally designed for benchmarking classifiers, it is also useful for evaluating clustering algorithms which simply discard class labels until the evaluation of the clustering results. Each class for classification corresponds to a cluster. Except in the experiments where we vary the number of data points, features, or clusters, the dataset consists of $5000$ samples organized into $100$ clusters. The data are generated with $10$ informative features, with no redundant or repeated features, ensuring that all dimensions contribute meaningfully to class separability. We standardize each feature by subtracting its mean and dividing by its standard deviation.
    
    \item The \chameleon dataset, available via the \textsc{clustbench} Benchmark Suite~\cite{gagolewski2022framework}, consists of $10000$ two-dimensional point clouds exhibiting complex, nonconvex cluster shapes with varying densities~\cite{DBLP:journals/computer/KarypisHK99}. The dataset corresponds to the configuration shown at the bottom of Figure~\ref{fig:example_centroids}, augmented with a substantial proportion of uniformly distributed data points that do not belong to any natural cluster.

    \item The \soybean dataset, available via the \textsc{UCI repository}~\cite{kelly2023uci} or \textsc{ClustPy}~\cite{leiber2023benchmarking}, is a  dataset of categorical features. It consists of $562$ plant samples belonging to one of $15$ classes. For each plant, there are $35$ observed categorical attributes describing the plant. After encoding categorical attributes numerically, we standardize each feature by subtracting its mean and dividing by its standard deviation.

    \item The \blobs dataset, sourced by \textsc{scikit-learn}~\cite{scikit-learn}, is a synthetic dataset specifically designed for controlled evaluation of clustering algorithms that consists of data points grouped around isotropic Gaussian clusters. Each cluster has standard deviation $1$. 
    The dataset consists of $5000$ $2$-dimensional data points, arranged in $100$ clusters, except in the experiments where we vary the number of data points, features and clusters. We standardize each feature by subtracting its mean and dividing by its standard deviation.

    \item The \rfifteen dataset, available via the \textsc{clustbench} benchmark suite~\cite{gagolewski2022framework}, consists of $600$ two-dimensional data points forming $15$ Gaussian clusters. The cluster centroids in \rfifteen are not arranged on a regular grid but exhibit non-uniform inter-cluster distances. We standardize each feature by subtracting its mean and dividing by its standard deviation.
\end{itemize}

\section{Implementation Details}\label{app:implementation_details}

In this section, we discuss implementation details for all the algorithms considered in the experiments presented in Section~\ref{sec:experiments}.

\spara{Implementation details of the naïve approach to \kr clustering.}
For the naïve \kr clustering baseline used in our experiments, we first run the \textsc{scikit-learn}~\cite{scikit-learn} implementation of standard \kmeans to extract the desired number of cluster centroids. For instance, if the goal is to extract two sets of $\card_1$ and $\card_2$ \semicentroidsword, \kmeans retrieves $\card_1 \card_2$ clusters. Then, using a coordinate-descent procedure implemented in Python with closed-form updates, we decompose a set of centroids into two smaller sets of \semicentroidsword whose Khatri–Rao product approximates the original centroids. The coordinate-descent procedure alternates between updating the \semicentroidsword of the first  and  second set, holding the other set fixed. At each step, we use the closed-form updates illustrated in Equation~\eqref{eq:update_naive} in Section~\ref{sec:naive}.  The procedure stops either when a maximum number of iterations is reached ($5000$ by default) or when the total sum of squared differences between the initial centroids and the Khatri-Rao aggregation of the \semicentroidsword becomes smaller than a user-specified threshold ($10^{-4}$ by default). Upon termination of the described coordinate-descent procedure, we have the output sets of \semicentroidsword. The corresponding centroids are readily obtained by aggregating \semicentroidsword. To conclude, we assign each data point to the closest centroid. The implementation is available online in our code repository~\footnote{\url{https://github.com/maciap/KhatriRaoClustering/blob/main/scripts/KRkmeansExperimentsLib.py}}. 

\spara{Implementation details of standard \kmeans.}
We rely on the well-established \textsc{scikit-learn} implementation of standard \kmeans. The two \kmeans baselines (namely \kmeans with $\card_1 + \card_2$ centroids and $\card_1 \card_2$ centroids) are obtained by simply specifying the number of centroids $\card_1 + \card_2$ and $\card_1 \card_2$ as input. 
In the scalability experiments, instead, to ensure a fair comparison, we use an implementation of \kmeans which mirrors the implementation of \hadkmeans described next.  

\spara{Implementation details of \hadkmeans.}
Our experiments rely on a simple Python implementation of \hadkmeans that is purely built on \textsc{NumPy}~\footnote{\url{https://numpy.org}}, taking advantage of vectorized operations for efficiency. Such implementation is available online~\footnote{\url{https://github.com/maciap/KhatriRaoClustering/tree/main/KathriRaokMeans}}.

For initialization, by default we sample random data points as the initial \semicentroidsword (as in Algorithm~\ref{alg:hadkmeans}). Alternatively, we can adopt the strategy inspired by $\kmeanspp$ described in Section~\ref{sec:algorithms_shallow}, which selects representative data points based on distance criteria. Our implementation of this strategy either deterministically chooses the data point farthest from the previously selected centroids, or samples data points with probability proportional to their distance from those centroids. After initialization, we  iteratively compute assignments, \semicentroidsword and corresponding centroids. Thanks to the closed-form updates introduced in Section~\ref{sec:algorithms_shallow}, the updates of \semicentroidsword
(and centroids) are implemented in a fully-vectorized manner. At each iteration we monitor convergence by tracking the movement of all reconstructed centroids; the algorithm terminates once this movement falls below a user-specified threshold or a maximum number of iterations is reached.

During the execution of the algorithm, in case empty clusters arise, they are handled by re\-initializing the corresponding \semicentroidword to a random data point.

Our implementation is deliberately simple and easy to follow. 
In the future, more optimized or parallelized implementations could be developed for larger-scale settings.

\hadkmeans admits a time-efficient and a memory-efficient implementation. 
Algorithm~\ref{alg:hadkmeans} presents the memory-efficient implementation 
that avoids storing the full set of centroids by computing them on the fly from the stored set of \semicentroidsword.
This approach can reduce memory requirements, particularly when the number of clusters is large since memory requirements only grow additively with the total number of \semicentroidsword instead of multiplicatively. For example, $\card_1 \card_2$ clusters demand storing only $\card_1 + \card_2$ \semicentroidsword instead of $\card_1 \card_2$ centroids. However, computing centroids  on the fly incurs a runtime overhead. A time-efficient implementation is obtained by computing centroids once and storing them.

\spara{Implementation details of standard deep clustering algorithms.} For \dkm and \idec, we rely on the off-the-shelf \textsc{PyTorch}-based implementations provided by  the \textsc{ClustPy} library~\footnote{https://github.com/collinleiber/ClustPy}. To find initial centroids, \dkm and \idec use the \textsc{scikit-learn} implementation of standard \kmeans. 

\spara{Implementation details of \kr deep clustering algorithms.}
For the implementation of \kr deep clustering algorithms, 
we build the implementation of \krdkm and \kridec on top of the \textsc{ClustPy} implementations of the corresponding standard deep clustering algorithms.

Extending the \textsc{ClustPy} implementation of a standard deep clustering algorithm to the \kr clustering paradigm is straightforward. During initialization, we rely on our implementation of \hadkmeans to obtain initial \semicentroidsword. We then re\-parameterize the centroids to satisfy the \kr structure and adjust the autoencoder parameters to conform to the Hadamard-decomposition re\-parameterization. Our implementation of \kr deep clustering algorithms is available online~\footnote{\url{https://github.com/maciap/KhatriRaoClustering/tree/main/KhatriRaoDeepClustering}}.

\section{Proofs}\label{app:proofs}

In this section, we collect the proofs of the propositions stated in the paper.

\iffalse 
\spara{Proof of Equation~\eqref{eq:update_naive}.}
\begin{proof}
We consider the case $\nsets=2$ and $\aggregator=\odot$. Let $\centroidsset=\{\centroid_{i,j}\}$ be the centroids obtained in the first phase, and consider the objective
\[
\sum_{i=1}^{\card_1}\sum_{j=1}^{\card_2}
\|\centroid_{i,j}-\protocentroid^i_1\odot\protocentroid^j_2\|^2 .
\]
Fix the second set $\{\protocentroid^j_2\}_{j=1}^{\card_2}$. Only the $\card_2$ centroids indexed by $(i,j)$ depend on $\protocentroid^i_1$, yielding the gradient
\[
2\sum_{j=1}^{\card_2}
\left(
\centroid_{i,j}-\protocentroid^i_1\odot\protocentroid^j_2
\right)\odot\protocentroid^j_2,
\]
Equating the gradient to zero and solving element-wise gives
\[
\protocentroid^i_1
=
\frac{\sum_{j=1}^{\card_2}\centroid_{i,j}\odot\protocentroid^j_2}
{\sum_{j=1}^{\card_2}\protocentroid^j_2\odot\protocentroid^j_2}.
\]
\end{proof}
\fi

\spara{Proof of Proposition~\ref{prop:closed_form_updates}.}
\begin{proof}
We illustrate the proof for the product aggregator. Consider the $j$-th \semicentroidword in the first set of \semicentroidsword. The optimal update for this \semicentroidword satisfies 
\[
\protocentroid^{j}_1  = \argmin_{\protocentroid} \sum_{l=1}^{\card_2} 
       \sum_{\xbf \in \cluster_{j,l}} (\xbf - \protocentroid  \odot \protocentroid_2^l)^2.  
\]
Therefore $\protocentroid^{j}_1$ can be found by computing the gradient of this sum of squared differences with respect to  $\protocentroid$ and equating it to the zero vector $\zerovector$. Doing so, one obtains 
\[
-2 \sum_{l=1}^{\card_2} 
       \sum_{\xbf \in \cluster_{j,l}}  (\xbf - \protocentroid  \odot \protocentroid_2^l) \protocentroid_2^l = \zerovector,
\]
which holds if and only if: 
\[
\protocentroid^{j}_1 = 
       \frac{
       \sum_{l=1}^{\card_2} 
       \sum_{\xbf \in \cluster_{j,l}} \xbf \odot \protocentroid^l_2 } 
        {
        \sum_{l=1}^{\card_2} |\cluster_{j,l}|
       %\sum_{\xbf \in \cluster_{j,l}} 
       {\protocentroid^l_2}  \odot {\protocentroid^l_2},
        }. 
\]
The derivations for the second set of \semicentroidsword are symmetric, and the proof for the sum aggregator is also similar. 
\end{proof}

\spara{Proof of Proposition~\ref{prop:maximizer}.}

\begin{proof}
Assuming $\nsets$ \semicentroidword\ sets of equal size and exact budget usage, each set contains $\card=\frac{\budget}{\nsets}$ \semicentroidsword, so the total number of centroids that can be represented is:
\[
\left(\frac{\budget}{\nsets}\right)^{\nsets}.
\]
Maximizing this function over $\nsets>0$ is equivalent to maximizing its natural logarithm
\[
\log \left(\frac{\budget}{\nsets}\right)^{\nsets} =  \nsets \log\!\left(\frac{\budget}{\nsets}\right)
= \nsets \log \budget- \nsets \log \nsets. 
\]
Differentiating with respect to $\nsets$ yields
\[
\log\!\left(\frac{\budget}{\nsets}\right)-1,
\]
which is positive for $\nsets<\frac{\budget}{e}$ and negative for $\nsets>\frac{\budget}{e}$. Moreover, the second derivative
\[
-\frac{1}{\nsets}<0\qquad(\nsets>0)
\]
shows that the log-objective is strictly concave, and therefore attains a unique maximum at $\nsets=\frac{\budget}{e}$ in the continuous domain.

When restricting to the divisor-only setting,  the optimal value of $\nsets$, $\nsets^{max}$,  is required to be an integer that exactly divides $\budget$, so that $\card^{max}=\frac{\budget}{\nsets^{max}}$ is an integer. Since the objective is strictly increasing for $\nsets<\frac{\budget}{e}$ and strictly decreasing for $\nsets>\frac{\budget}{e}$, among all admissible values the maximum is attained at one of the two values of $\nsets$ that are immediately below or above $\frac{\budget}{e}$ (when each exists). All other admissible divisors are necessarily further away from $\frac{\budget}{e}$, and therefore yield a strictly smaller objective value.
\end{proof}

\spara{Proof of Proposition~\ref{prop:number_of_sets_bound}.}
\begin{proof}
     We present the proof for the product aggregator. However, extension to different aggregator functions is straightforward. 

     First,  to  see why it must be $\nsets^* \geq   \log_{\card_{min}} \nclusters $, notice that, if $\nsets^* < \log_{\card_{min}} \nclusters$ then $\card_{min}^{\nsets^*} <  \nclusters$, and $\card_{min}^{\nsets^*}$ is the maximum number of centroids that can be represented using $\nsets^*$ sets of at least $\card_{min}$ \semicentroidsword.
     As regards the other inequality, to prove that $\nsets^* \leq \lceil \frac{\nclusters}{\card_{min}-1} \rceil$, it is sufficient to construct an example where all sets have a protocentroid with all entries equal to $1$ and the remaining at least $\card_{min}-1$ protocentroids that are equal to centroids. Different sets contain different centroids. 
     In the illustrated scenario, each set of protocentroids represents exactly at least $\card_{min}-1$ centroids.  Thus, $\lceil \frac{\nclusters}{\card_{min}-1} \rceil$ sets of \semicentroidsword can always represent $\nclusters$ centroids. 
\end{proof}

\end{document}